\definecolor{dark2green}{rgb}{0.1, 0.65, 0.3}
\definecolor{dark2orange}{rgb}{0.9, 0.4, 0.}
\definecolor{dark2purple}{rgb}{0.4, 0.4, 0.8}
\definecolor{sns_green}{HTML}{2CA02C}
\definecolor{sns_orange}{HTML}{FF7F0E}
\definecolor{sns_blue}{HTML}{1F77B4}
\colorlet{first_color}{sns_green!40}
\colorlet{second_color}{sns_blue!50}
\colorlet{third_color}{sns_orange!40}
\newtheorem{theorem}{Theorem}[section]
\newtheorem{corollary}{Corollary}[theorem]
\newtheorem{lemma}[theorem]{Lemma}
\newtheorem{definition}{Definition}[section]
\newtheorem{remark}{Remark}[section]
\newcommand{\std}[1]{\textsuperscript{\tiny$\pm$#1}}
\def\@footnotecolor{red}
\patchcmd{\@footnotemark}{\hyper@linkstart{link}}{\hyper@linkstart{footnote}}{}{}
\definecolor{intcolor}{HTML}{CA0020}
\definecolor{extcolor}{HTML}{0571B0}
\title{Tropical Attention: \\ Neural Algorithmic Reasoning for Combinatorial Algorithms}
\author{%
  Baran Hashemi \\
  Origins Data Science Lab\\
  Technical University of Munich\\
  Munich, Germany \\
  \texttt{baran.hashemi@tum.de} \\
  \And
  Kurt Pasque \\
  Naval Postgraduate School \\
  Monterey, California \\
  \texttt{kurt.pasque@nps.edu} \\
   \AND
   Chris Teska \\
   Naval Postgraduate School \\
   Monterey, California \\
   \texttt{christopher.teska@nps.edu} \\
   \And
   Ruriko Yoshida \\
   Naval Postgraduate School \\
   Monterey, California \\
   \texttt{ryoshida@nps.edu} \\
}
\pgfplotsset{compat=1.18}
\begin{document}

\maketitle

\begin{abstract}
\emph{Can algebraic geometry enhance the sharpness, robustness, and interpretability of modern neural reasoning models  by equipping them with a mathematically grounded inductive bias?}
To answer this, we introduce Tropical Attention, an attention mechanism grounded in tropical geometry that lifts the attention kernel into tropical projective space, where reasoning is piecewise-linear and 1-Lipschitz, thus preserving the polyhedral decision structure inherent to combinatorial reasoning. We prove that Multi-Head Tropical Attention (MHTA) stacks universally approximate tropical circuits and realize tropical transitive closure through composition, achieving polynomial resource bounds without invoking recurrent mechanisms. These guarantees explain why the induced polyhedral decision boundaries remain sharp and scale-invariant, rather than smoothed by Softmax. Empirically, we show that Tropical Attention delivers stronger out-of-distribution generalization in both length and value, with high robustness against perturbative noise, and substantially faster inference with fewer parameters compared to Softmax-based and recurrent attention baselines, respectively. For the first time, we push the domain of neural algorithmic reasoning beyond \textbf{PTIME} problems to \textbf{NP-hard/complete} problems, paving the way toward  sharper and more expressive Large Reasoning Models (LRMs) capable of tackling complex combinatorial challenges in Phylogenetics, Cryptography, Particle Physics, and Mathematical Discovery. The code is available at \url{https://github.com/Baran-phys/Tropical-Attention/}.

\end{abstract}

\section{Introduction}
\label{sec:introduction}

The \emph{tropical semiring} \(\mathbb{T}:=(\mathbb{R}\cup\{-\infty\},\max,+)\) (or its “min‑plus’’ variant) replaces ordinary addition by maximum and multiplication by addition ~\cite{sturmfelsmaclagan}. Polynomials over this semiring evaluate to \emph{piecewise‑linear}, polyhedral functions. These are the main objects of study in \emph{tropical geometry} that translates algebraic geometry into combinatorics, turning varieties into polyhedral complexes, with wide applications across the intersection of matroid theory, combinatorial optimization, auction theory, enumerative geometry~\cite{ardila2007tropicalhyperplanearrangementsoriented, ardilamatroid, fink2015stiefeltropicallinearspaces,joswigbook, sturmfelsmaclagan}, and recently Machine Learning~\cite{pasque2024tropicaldecisionboundariesneural,yoshidatropnn,yoshida2017tropicalprincipalcomponentanalysis,9394420,brandenburg2024real,pmlr-v80-zhang18i,hertrich2023relu,bacciu2023tropical}. 
Because it analyses the entire polyhedral structure of solutions rather than a single Euclidean point, tropical geometry is a natural mathematical language for algorithms that must reason over \emph{families} of inputs, particularly those generating such polyhedral structures. Dynamic programming (DP) exemplifies this connection. It is a cornerstone for numerous combinatorial optimization problems. The structure of these problems allows DP value functions to be described as piecewise-linear functions, forming polyhedral structures. They are just recursively constructed circuits over tropical semirings. Each such circuits, namely \textbf{tropical circuits}, compute monomials as feasible solutions over the underlying semiring making the DP update step effectively linear within this algebraic framework \cite{gaubert2011tropical,Jukna_2014}. 

\begin{figure*}[t!]
  \centering
  \begin{subfigure}[b]{0.95\linewidth}
    \includegraphics[width=\linewidth]{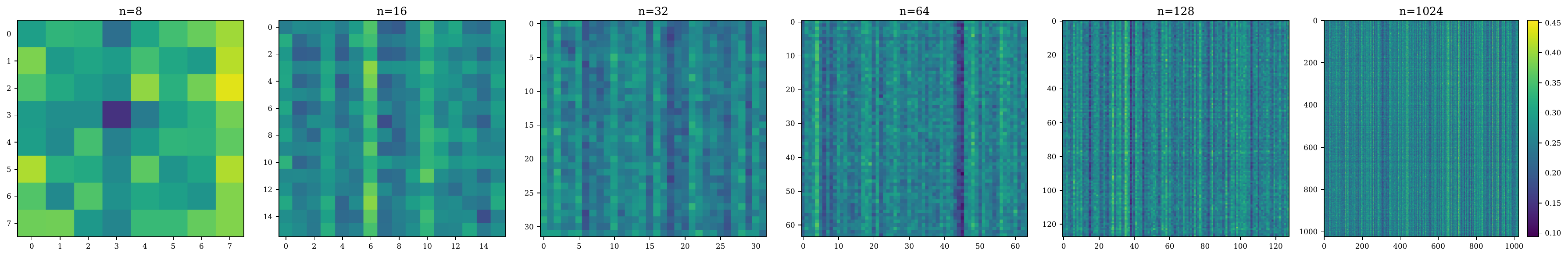}
  \end{subfigure}
  \hfill
  \begin{subfigure}[b]{0.95\linewidth}
    \includegraphics[width=\linewidth]{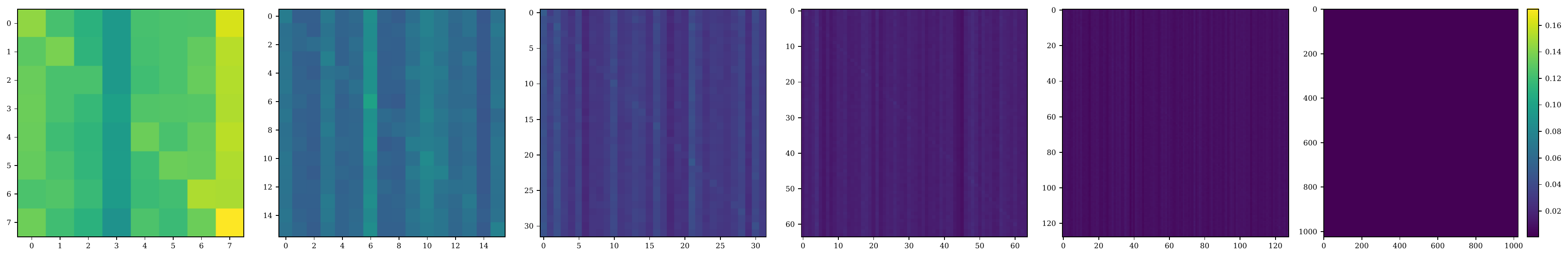}
  \end{subfigure}
  \hfill
  \begin{subfigure}[b]{0.95\linewidth}
    \includegraphics[width=\linewidth]{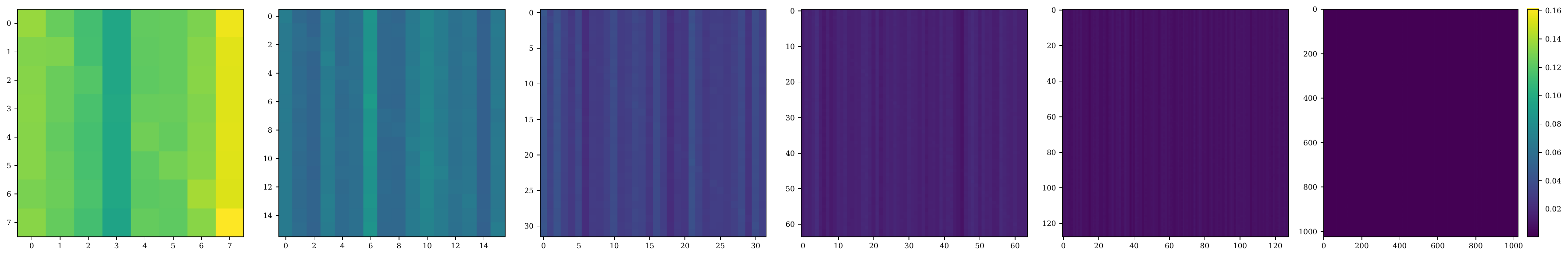}
  \end{subfigure}
  \caption[Caption for LOF]
  {
    (top) {\bf Tropical Attention} with sharp attention maps on learning the notorious \footnote{The challenge by Michael Galkin~\cite{galkin2024medium}} \textbf{\textsc{Quickselect}} algorithm, showcasing a size-invariance and OOD lengths generalization behavior far beyond training ($8 \rightarrow 1024$). In contrast, both (middle) adaptive-softmax and (bottom) vanilla-softmax heads dilute and uniformly disperse as sequence length grows, failing to generalize. Each column evaluates the models on a new batch of independently drawn inputs of increasing length. Since the position of the target $k$-th element is different in each batch, the pattern of attention naturally changes to reflect the new data.}
  \label{fig:qselect_attn}
\end{figure*}

\begin{table}[t!]
  \centering
  \small
  \setlength{\tabcolsep}{3.5pt}
  \renewcommand{\arraystretch}{1.12}
  \begin{tabular}{lccccc}
 \toprule
\multirow{2}{*}{Combinatorial Tasks} & \multicolumn{2}{c}{\texttt{Baseline Transformers}} & \multicolumn{2}{c}{\texttt{Universal Transformers (UT)}} & \multirow{2}{*}{\makecell[c]{\texttt{Tropical}\\\texttt{Transformer}}} \\
\cmidrule(lr){2-3}\cmidrule(lr){4-5}
& \textit{Vanilla} & \textit{Adaptive} & \textit{Vanilla w/ ACT} & \textit{Adaptive w/ ACT} & \\
\midrule
    \textsc{ConvexHull}         & $42.75\std{2.06}$ & $48.25\std{0.96}$ & $43.37$ & $53.83$ & \bm{$97.00\std{1.15}$} \\
    \textsc{Knapsack}           & $41.06\std{1.76}$ & $39.18\std{2.59}$ & $54.57$ & $55.04$ & \bm{$60.00\std{2.09}$} \\
    \textsc{Quickselect}        & $4.66\std{5.98}$  & $22.89\std{2.49}$ & $37.05$ & $40.44$ & \bm{$77.06\std{3.78}$} \\
    \textsc{BinPacking}         & $60.75\std{2.49}$ & $64.25\std{1.09}$ & $64.07$ & $63.28$ & \bm{$66.01\std{1.55}$} \\
    \textsc{SCC}                & $51.30\std{3.91}$ & $56.50\std{2.22}$ & $74.68$ & $70.81$ & \bm{$89.25\std{3.49}$} \\
    \textsc{SubsetSum}         & $21.13\std{2.45}$ & $22.75\std{5.25}$ & $41.43$ & $42.05$ & \bm{$87.50\std{6.45}$} \\
    \textsc{BalancedPartition}  & $80.55\std{2.91}$ & $91.90\std{5.52}$ & $80.01$ & $91.13$ & \bm{$96.73\std{3.50}$} \\
    \textsc{3SUM}               & $80.00\std{0.82}$ & $79.75\std{0.50}$ & $81.12$ & $81.67$ & \bm{$82.75\std{1.59}$} \\
    \textsc{MinCoinChange}      & $9.25\std{1.86}$  & $17.98\std{2.29}$ & $17.33$ & $23.67$ & \bm{$42.52\std{1.47}$} \\
    \midrule\midrule
    \textsc{Floyd--Warshall}    & $12.81\std{4.03}$ & $1.31\std{0.36}$  & $7.59$  & $0.97$  & \bm{$0.81\std{0.08}$} \\
    \textsc{FractionalKnapsack} & $0.88\std{0.06}$  & $0.86\std{0.08}$  & $0.83$  & $0.85$  & \bm{$0.66\std{0.10}$} \\
    \bottomrule
  \end{tabular}
  \vspace*{2mm}
\caption{Out‐of‐distribution Micro‐$\mathrm{F}_1$ score (top) and MSE for regression tasks (bottom) under \textbf{Length OOD} test. The \textbf{Tropical Transformer} outperforms all baselines, across combinatorial tasks while delivering $3\times$-$9\times$ faster inference and using $\sim20\%$ fewer parameters than Universal Transformer (UT)~\cite{dehghani2018universal} baselines with iterative attention that approximate the closure.} 
\label{tab:ood_len_f1_mse}
\end{table}

Many combinatorial optimization problems are specified by a family of these optimal solutions. For example, shortest-path algorithms such as Floyd-Warshall explicitly manifest this as tropical matrix products and closures, and their feasible solutions trace the faces of a polyhedral complex in parameter space \cite{Joswig_2022}. 
More broadly, if candidate solutions to combinatorial algorithms are monomials (linear segments) in tropical space, then the computation is just a circuit of tropical gates. Several combinatorial algorithms like shortest paths~\cite{mohri2002semiring, hofner2012dijkstra}, change making~\cite{cygan2019problems,chan2022more}, knapsack~\cite{martello2000new,bringmann2023faster,axiotis2018capacitated}, are nothing but recursively constructed tropical circuits. So the \textbf{tropical circuit model can give us a bridge between combinatorial optimization and neural architectures to perform sharp, piecewise-linear, and polyhedral computations,} offering a representation around which we can align the attention mechanism at the core of Transformer reasoning.  

However, vanilla Transformers \cite{vaswani2023attentionneed} are misaligned to this objective. Softmax-normalized dot-product attention lives in Euclidean geometry and produces smooth, quadratic decision boundaries. This smoothness blurs the hard \(\arg\max/\arg\min\) structure on which combinatorial algorithms rely. As input length grows, softmax distributions can disperse, resulting in an increasingly flat  probability distribution, a.k.a, dispersion~\cite{veličković2024softmaxforsharpoutofdistribution} or attention fading \cite{nakanishi2025}. 
Moreover, the exponential sensitivity of softmax makes logits vulnerable to small \(\ell_\infty\) perturbations, harming adversarial robustness. 
As a result, Transformers equipped with softmax attention fail to extrapolate beyond the training regime of input length or magnitude on combinatorial tasks. For non-algorithmic reasoning tasks, even though injecting positional information \cite{sun2022lengthextrapolat,press2022trainshorttestlong,duan2024interpolat} can alleviate the length extrapolation issue, we believe that the core of the issue lies within the attention mechanism itself.

\paragraph{Our Contribution:} As a result, we propose \emph{Tropical Attention}, a novel attention mechanism that incorporates tropical algebraic geometry to perform reasoning and information routing in tropical projective space. In tropical projective space, the attention scores are governed by the tropical Hilbert projective metric; consequently, the induced map is piecewise-linear, idempotent in the aggregation, and non-expansive (1-Lipschitz). In particular, Tropical Attention preserves the polyhedral structure characteristic of combinatorial value functions while inheriting the projective invariance and shortest-path geometry captured by the Hilbert metric \cite{joswigbook}. Our viewpoint is aligned with Neuroalgebraic Geometry~\cite{marchetti2025invitation}, in which neural computations are modeled by algebraic and tropical structures acting on polyhedral complexes.
The mechanism is drop-in for standard Transformer blocks where reasoning occurs in tropical projective space and then returns to Euclidean coordinates for subsequent layers, so training and architectures remain unchanged.

We show that multi-head Tropical Attention (MHTA) realizes the target class of dynamic programs and approximates tropical circuits with \emph{tropical transitive closure}. In particular, our expressivity theorems construct MHTA stacks that simulate tropical circuits without super-polynomial blow-ups in network size or the requirement for recurrent reasoning~\cite{dehghani2018universal}. In other words, MHTA has sufficient capacity to approximate closure by absorbing the feasible solutions into its tropical polyhedral stratification, thus avoiding recurrent hidden state refinement~\cite{tiezzi2024resurgence}.

Empirically on combinatorial tasks, including \textbf{NP-hard/complete} problems, this alignment translates into stronger out-of-distribution generalization together with substantial inference-time and parameter efficiencies relative to recurrent-style baselines. 
This study, also for the first time extends the applicability of Neural Algorithmic Reasoning~\cite{Veličković2020Neural} beyond the polynomial-time solvable problems to NP-hard/complete combinatorial optimization problems, such as \textsc{Knapsack}, \textsc{Bin Packing}, and \textsc{Balanced Partition} problems.

\section{Background and Related Work}
\label{sec:background}
\subsection{Out-of-Distribution Generalization}
An important measure of a supervised learning model's reasoning is its ability to generalize to inputs that differ fundamentally from those encountered during training. This is known as out-of-distribution (OOD) generalization. Following~\cite{deluca2025positional} notations formally, let $\mathcal{X}$ denote the feature space and $\mathcal{Y}$ be the label set. A model $h:\mathcal{X}\to\mathcal{Y}$ is learned from training examples drawn independently and identically distributed from a training distribution $D_{\mathrm{tr}}$ over $\mathcal{X}\times \mathcal{Y}$. Given a distinct test distribution $D_{\mathrm{te}}$ on the same space, we define the OOD risk as, $\mathcal R_{D_{\mathrm{te}}}(h) \;:=\; \mathbb E_{(x,y)\sim D_{\mathrm{te}}}[\ell(h(x),y)],$
where $\ell$ is a loss function. 
Its empirical estimate on a finite sample $S$ drawn from $D_{\mathrm{te}}$ (i.e., $S \sim D_{\mathrm{te}}^{|S|}$) is, $\widehat{\mathcal R}_{S}(h) \;:=\; \frac{1}{|S|}\sum_{(x,y)\in S}\ell(h(x),y).$

We say that the model $h$ OOD-generalizes from $D_{\mathrm{tr}}$ to $D_{\mathrm{te}}$ if its OOD risk $\mathcal{R}_{D_{\mathrm{te}}}(h)$ remains comparable to its in-distribution risk $\mathcal{R}_{D_{\mathrm{tr}}}(h)$, indicating minimal performance degradation despite the distributional shift. In the context of neural algorithmic reasoning, three main types of deviation between $D_{\mathrm{tr}}$ and $D_{\mathrm{te}}$ are important in measuring a model's capabilities:

{\bf 1. Length Generalization}
Both distributions draw their numerical entries from the same range but the test sequences are strictly longer, $D_{\mathrm{te}}(\mathcal X)\subsetneq\bigl(\mathbb R^{>0}\bigr)^{n_{\max}}$ with $n_{\max}>n_{\mathrm{tr}}$. Here, a good performance indicates that the network has learned a parallel or recursive scheme that scales with input size rather than memorizing a fixed shallow circuit.

{\bf 2. Value generalization}
The two distributions share the same support with respect to sequence length but $\operatorname{supp}\bigl(D_{\mathrm{te}}(\mathcal X)\bigr)$ contains magnitudes never encountered during training, i.e.\ $\operatorname{supp}\bigl(D_{\mathrm{te}}(\mathcal X)\bigr)
\setminus\operatorname{supp}\bigl(D_{\mathrm{tr}}(\mathcal X)\bigr)\neq\varnothing$ . For arithmetic or DP-style tasks, value generalization is the clearest evidence that the model has learned the \emph{rule} rather than the lookup table of seen inputs.

{\bf 3. Perturbative-noise generalization} Noisy data, whether arising from measurement error, adversarial attack, or any other source, often causes models to make mistakes and must be accounted for in model design. To test noise robustness, $D_{\mathrm{te}}$ is obtained from $D_{\mathrm{tr}}$ by an
$\ell_{p}$-bounded, perturbation map
$\mathcal A:\mathcal X\to\mathcal X$ such that
$x_{\mathrm{ptb}}=\mathcal A(x)$ with
$\|x_{\mathrm{ptb}}-x\|_{p}\le\varepsilon$. Robust generalization demands that the risk remains low even under the worst allowed $\mathcal A$. This regime probes the stability and smoothness of the learned function of the architecture. The perturbative noise robustness of
Neural Algorithmic Reasoning models is very important for many real-world systems, especially for cryptographic
schemes~\cite{8819706}. 

Length, value, and perturbative noise generalization stress complementary facets of algorithmic competence \cite{nerem2025graphneural,mahdavi2023betteroutof}. Thus, a model as a true reasoning circuit \cite{li2025circuittransf,dudzik2022graph} that excels simultaneously in all three regimes offers strong evidence of having internalized the underlying combinatorial procedure rather than a brittle statistical surrogate.

\subsection{Softmax Self-Attention Mechanism} 
\label{subsec: self-attention}
Given an input sequence $\mathbf X =\bigl[\mathbf x_{1},\dots,\mathbf x_{N}\bigr]^{\top} \in\mathbb{R}^{N\times d_{x}},$ let $\mathbf Q=\mathbf X\mathbf W_{Q}^{\top}, \mathbf K=\mathbf X\mathbf W_{K}^{\top}, \mathbf V=\mathbf X\mathbf W_{V}^{\top},$ where the parameter matrices satisfy $\mathbf W_{Q},\mathbf W_{K}\in\mathbb{R}^{d\times d_{x}}$ and $\mathbf W_{V}\in\mathbb{R}^{d_{v}\times d_{x}}$. Denote by $\mathbf q_{i}^{\top}$ and $\mathbf k_{j}^{\top}$ the $i$-th
and $j$-th rows of $\mathbf Q$ and $\mathbf K$, respectively, and
$\tau>0$ for a temperature parameter. Vanilla self-attention computes, for every token $i$,
\begin{equation}\label{eq:sa-vector}
\mathbf h_{i}
=\sum_{j=1}^{N}\alpha_{ij} \mathbf v_{j},
\qquad
\alpha_{ij}
=\operatorname{softmax}_{\tau}\bigl(\langle\mathbf q_{i},\mathbf k_{j}\rangle\bigr)
:=\frac{\exp\bigl(\langle\mathbf q_{i},\mathbf k_{j}\rangle/\tau\bigr)}
{\sum_{t=1}^{N}\exp\bigl(\langle\mathbf q_{i},\mathbf k_{t}\rangle/\tau\bigr)},
\quad
i=1,\dots ,N,
\end{equation}
where the $\operatorname{softmax}$ is applied independently to each row of the score matrix $\mathbf Q\mathbf K^{\top}$. The temperature $\tau$ modulates the sharpness of the resulting probability vector, as $\tau\to 0$ the weights approach a one-hot selection, whereas large $\tau$ yields an almost uniform mixture. Equation~\ref{eq:sa-vector} measures similarity with the Euclidean inner product, which is spherically invariant, meaning that every coordinate contributes equally, regardless of its algorithmic significance. Despite it's success in many tasks~\cite{vaswani2023attentionneed,dosovitskiy2021imageworth16x16words}, its geometric and numerical properties are ill-suited to algorithmic reasoning~\cite{deluca2025positional,rodionov2025discrete}. We summarize the main shortcomings.

{{\bf 1. Inherent blurriness} 
The exponential map assigns a non-zero weight to \emph{every} token; even at low temperatures the second-largest term remains strictly positive. As problem size grows, the gap between the top two logits often decreases (e.g.\ when costs are drawn from a common distribution), so the resulting distribution cannot converge to a one-hot vector. In practice this leads to \emph{soft} rather than decisive selections, hampering tasks that require exact order statistics \cite{veličković2024softmaxforsharpoutofdistribution,nakanishi2025}. Recent diagnostic suites show that large language models fail on simple tasks of finding minima and second-minima even within In Distribution (ID) length tests \cite{markeeva2024clrs,ong2022learnable}. The attention kernel’s inability to sharpen with scale is a primary culprit.}

{{\bf 2. Sensitivity to small perturbations} 
Because $\operatorname{softmax}(z)\propto e^{z}$, a perturbation of size $\delta$ in the largest logit changes the corresponding weight by a multiplicative factor $e^{\delta}$. An adversary who can alter a single entry of $\mathbf Q\mathbf K^{\top}/\tau$ by $\mathcal{O}(\log N)$ may invert the ranking of two tokens, propagating an $\mathcal{O}(1)$ error to downstream activations \cite{xuan2025exploring,kim2023exploring}. This $\ell_{\infty}$-fragility persists even after common stabilisers such as temperature scaling or normalization layers \cite{xuan2025exploring}.}

{{\bf 3. Mismatch with polyhedral decision boundaries}
In a combinatorial optimization the value function is a tropical polynomial—piecewise linear with faces aligned to coordinate hyperplanes \cite{joswigbook,jukna2023tropical}. The quadratic forms generated by Euclidean dot products carve the domain into spherical caps \cite{nielsen2024ellipticalattention} rather than polyhedral cones; reproducing a DP recurrence therefore demands exponentially many heads or layers unless the desired structure is injected by hand.}

{{\bf 4. Temperature–gradient dilemma}
Driving the distribution toward a hard $\arg\max$ necessitates lowering the temperature parameter $\tau$. Yet as $\tau\to 0$ the Jacobian of the softmax grows like $\tau^{-1}$, causing gradient explosion/vanishing \cite{kedia2024transformer}. Careful schedule tuning or gradient clipping becomes mandatory~\cite{xuan2025exploring}, adding hyper-parameter overhead.}

\subsection{Neural Algorithmic Reasoning}
\label{subsec:nar}
The problem of bridging symbolic algorithms and differentiable models has become known as \emph{Neural Algorithmic Reasoning} (NAR). Neural Algorithmic Reasoning involves developing neural models and learning procedures to facilitate the \emph{internalization} of algorithms directly in models’ weights. Starting from early work~\cite{Veličković2020Neural} that aimed to demonstrate the applicability of Graph Neural Networks~(GNNs) to approximate classical algorithms~\cite{Veli_kovi__2021}, the community has subsequently developed and expanded further in different directions~\cite{li2024openbook,rodionov2023neural,georgiev2023neural,georgiev2024deep,bohde2024on,xu2024recurrent,rodionov2025discrete,bounsi2025transformers,ibarz2022a}. Some notable applications are constructing~\cite{jang2024graph} and enumerating~\cite{hashemi2025can} combinatorial structures of a particular type~\cite{jang2024graph}, and dynamic programming~\cite{pmlr-v198-ibarz22a,dudzik2022graph}.
A fundamental objective of NAR is to achieve robust out-of-distribution (OOD) generalization through \emph{algorithmic alignment}. Typically, models are trained and validated on small sets/sequences/graphs and tested on larger sets/sequences/graphs. This is inspired by classical algorithms’ \emph{size-invariance}, where correctness of the solution is maintained irrespective of the input size. Our work pursues this objective from a fresh, tropical geometric angle and provides universality guarantees and expressivity for an attention core within the NAR framework.

Recent work exclusively tried to quantify NAR failures when test sequences are longer~\cite{nerem2025graphneural,mahdavi2023betteroutof} or numerically larger~\cite{deluca2025positional}, but have not assessed noise robustness scenarios. Perturbative noise itself is also of importance since real-world deployments must withstand the worst-case inputs and noise. Robustness in this setting is a test for whether a model has internalized genuine algorithmic structure rather than superficial statistical cues. Hence, we introduce \textit{perturbative-noise generalization} as a third pillar for NAR benchmarking and show that Tropical Transformer demonstrates systematic gains across all three axes.

Based on the past related works, one can establish a demand for an attention mechanism that (i) is expressive and respects the underlying geometric structure of combinatorial algorithms, (ii) mitigates softmax dispersion that hampers OOD generalization, and (iii) delivers OOD noise robustness benefits. Tropical Attention positions itself at this intersection, drawing on a decade of tropical geometric insights to advance neural algorithmic reasoning. \textbf{To best of our knowledge, this level of incorporating Tropical algebraic geometry (not just pre-post processing arithmetic}~\cite{dudzik2022graph,bacciu2023tropical,dudzik2024asynchronous}\textbf{) to define a new mathematically grounded reasoning architecture has not been done before.}

\section{Tropical Attention: \textit{Reasoning in Tropical projective space}}
\label{sec:tropical-attention}
Tropical Attention arises from \emph{combinatorial algorithm alignment}, that is information exchange is governed by order statistics, namely maxima, minima, and interval widths rather than absolute magnitudes. These operations live naturally in the tropical semiring, whose idempotence, translation covariance, and projective scale invariance match the properties of tropical geometry and it's computational mirror, tropical circuits. By contrast, the dot-product–softmax kernel depends on absolute scale and temperature, yields smooth~(quadratic) decision boundaries, and exhibits dispersion as sequence length grows, thereby blurring the sharp structure required for decisive reasoning.

Our goal is to replace the \emph{dot–product}, Softmax-based kernel of vanilla self‑attention with a reasoning core that (i) takes place in the tropical polyhedral complex (ii) preserves the piecewise‑linear geometry of combinatorial problems, and (iii) inherits the \emph{1–Lipschitz} robustness of tropical linear maps. 
To do so, we perform a \textbf{tropicalization} map of queries, keys, and values to piecewise-linear projective spaces carved out by polyhedral constraints, compute attention weights with the \emph{tropical Hilbert projective metric} (See Appendix \ref{subsec: tropgeom} and~\cite{joswigbook} for details on tropical algebraic geometry.), aggregate by a tropical matrix–vector product, and finally map the result back to Euclidean space so that the rest of the original algorithm (e.g Transformer) modular stack is untouched. We present the framework relating robustness and piecewise‑linearity of maps and show how our proposed scheme offers improvements on OOD generalization tasks.

Let $\mathbf{X}\in\mathbb{R}^{N\times d}$ be the token embedding of the input. We define the \emph{tropicalization map} by going to an amoeba representation of the input with a learnable valuation map, $\Phi: \mathbb{R}^{N\times d}\rightarrow (\mathbb{TP}^{d-1})^N$
\begin{equation}
\Phi_\lambda(\mathbf X)_i\;=\;
\mathbf{U}_i- \max_{1\leq r\leq d}\mathbf{U}_{ir}.\mathbf{1}_d,~~\text{where}~~\mathbf{U} = \log\bigl(\max(\mathbf{0,X})\bigr) \in \mathbb{R}^{N\times d}
\label{eq:phi}
\end{equation}
for each row $i\in {1,\dots, N}$. The constant shift enforces $\max_{i}\phi_\lambda(\mathbf x)_i = \epsilon$, so the output of $\phi_\lambda$ always lies in the tropical simplex, $\Delta^{\,d-1}:= \bigl\{\,z\in\mathbb R^{d}\big|\max_{i}z_i = \epsilon \bigr\}$, where every vector is projectively equivalent to exactly one point in the tropical simplex. In other words, $\Phi$ is a section of the quotient $\mathbb{R}^d/\mathbb{R}\mathbf{1}$.

\begin{lemma}
\label{lem:phi-valuation}
For every embedded coordinate $i\in[N]$, the function  
\[
v_\lambda(x)\;:=\;\bigl[\phi_\lambda(x)\bigr]_{i}
               \;=\;
               \begin{cases}
                 \log(x)-\lambda,& x>0,\\[2pt]
                 -\infty,& x\le 0,
               \end{cases}
\]
where {\boldmath$\phi_\lambda$} is a (projective) valuation map. Hence the shifted map \(\widetilde v(x)=v_\lambda(x)+\lambda=\log(\max(0,x))\)
is an Archimedean valuation in the classical sense, and  
$\Phi$ is a matrix‑valued valuation modulo tropical scalars; its image lies in the tropical simplex.
\end{lemma}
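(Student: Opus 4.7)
The plan is to reduce the lemma to three elementary valuation checks on the single-coordinate map $\widetilde{v}(x) = \log(\operatorname{ReLU}(x))$, after which the shifted version $v_\lambda = \widetilde{v} - \lambda$ and the matrix-valued $\Phi_\lambda$ follow by standard tropical-scalar and coordinatewise bookkeeping.

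First I would verify the three valuation axioms from the valuation definition in Section \ref{subsec: tropgeom} for the scalar map $\widetilde{v}$. For axiom (1), observe that $\log(\operatorname{ReLU}(x)) = -\infty$ precisely when $\operatorname{ReLU}(x) = 0$, i.e.\ $x \leq 0$, so the preimage of the tropical additive identity is exactly the classical ``zero'' set detected by the ReLU, and $\widetilde{v}(x) \in \mathbb{R}$ for $x>0$. For axiom (2), when $x, y > 0$ we have $\operatorname{ReLU}(xy) = xy$ and $\log(xy) = \log x + \log y = \widetilde{v}(x) + \widetilde{v}(y)$; if either factor is nonpositive both sides collapse to $-\infty$ under the convention $c + (-\infty) = -\infty$. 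For axiom (3), the elementary bound $x + y \leq 2\max(x,y)$ yields $\log(x+y) \leq \log 2 + \max\{\log x, \log y\}$, which is the non-Archimedean inequality up to the additive constant $\log 2$ already acknowledged as harmless in Section \ref{subsec: tropgeom}; the cases where one or both summands are tropically zero are immediate from the definition of $\max$ on $\mathbb{R} \cup \{-\infty\}$.

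Next I would lift to the projective and matrix setting. Because tropical multiplication is classical addition, subtracting the scalar $\lambda$ corresponds to tropical scalar multiplication by $(-\lambda)$, so $v_\lambda$ and $\widetilde{v}$ represent the same class in $\mathbb{TP}^{d-1}$; all three axioms are invariant under such scalings, which is precisely the ``modulo tropical scalars'' qualifier in the statement. Applying $v_\lambda$ coordinate-wise and token-wise produces the matrix-valued map $\Phi_\lambda:\mathbb{R}^{N\times d}\to(\mathbb{R}\cup\{-\infty\})^{N\times d}$. Choosing the row-wise learnable shift $\lambda_n = \max_i \log(\operatorname{ReLU}(X_{ni}))$ enforces $\max_i [\Phi_\lambda(\mathbf{X})]_{ni} = 0$, which is the canonical representative of the projective class in the tropical simplex $\Delta^{d-1}$.

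The main obstacle is the $\log 2$ slack in axiom (3): it forbids calling $\widetilde{v}$ a valuation in the strict non-Archimedean sense of the definition. However, the projective quotient together with the global additive shift by $\lambda$ absorbs any such constant, so the \emph{projective} valuation claim—which is what the lemma asserts—goes through. This is exactly the framing adopted in Section \ref{subsec: tropgeom}, and it is the only substantive point that has to be defended carefully rather than checked by direct computation.
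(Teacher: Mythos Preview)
Your proposal is correct and follows essentially the same approach as the paper: both verify the three valuation axioms via the elementary identities $\log(ab)=\log a+\log b$ and $\log(a+b)\le\max\{\log a,\log b\}+\log 2$, handle the $-\infty$ edge cases, and then pass to the coordinate-wise/matrix map. The only cosmetic difference is that the paper first checks the axioms for $v_\lambda$ (obtaining the twisted multiplicativity $v_\lambda(ab)=v_\lambda(a)+v_\lambda(b)-\lambda$) and then shifts by $+\lambda$ to recover the classical $\widetilde v$, whereas you work with $\widetilde v$ directly and shift afterwards; your explicit discussion of the $\log 2$ slack is, if anything, slightly more careful than the paper's.
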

After mapping each input token to tropical projective space,
\(\mathbf Z=\phi_\lambda(\mathbf X)\in\mathbb{TP}^{N\times d-1}\),
we compute attention independently across \(H\) heads.

\begin{definition}[Multi‑head Tropical Attention (MHTA)]
\label{def:mh-ta}
Let $d_k=d/H$ be a fixed head dimension. Then, for every head $(h\in[H]$ one can choose learnable matrices
\(\mathbf W_Q^{(h)},\mathbf W_K^{(h)},\mathbf W_V^{(h)} \in\mathbb R^{d_k\times d}\)
and define the tropical linear projections~\cite{yoshidatropnn}
$\mathbf{Q}^{(h)}=\mathbf{Z}\;\odot\;\mathbf{W}_Q^{(h)\!\top},
\mathbf{K}^{(h)}=\mathbf{Z}\;\odot\;\mathbf{W}_K^{(h)\!\top},
\mathbf{V}^{(h)}=\mathbf{Z}\;\odot\;\mathbf{W}_V^{(h)\!\top}$
where \(\odot\) denotes max–plus matrix multiplication,
\((\mathbf A\odot\mathbf B)_{ij}=\max_{t}\{\mathbf A_{it}+\mathbf B_{tj}\}\). Then, using \(d_{\mathbb H}\) the tropical Hilbert projective metric, defined in~\ref{def:hilbert}, we will have the \emph{tropical attention score}
\[
\mathbf S^{(h)}_{ij}
     =-\;d_{\mathbb H}\!\bigl(\mathbf q^{(h)}_{i},\mathbf k^{(h)}_{j}\bigr),
     \qquad i,j\in[N], 
\]
that comes with Projective Invariance and Non-expansiveness condition (discussed in~\ref{def:hilbert}). Thereafter, the head outputs are aggregated via tropical matrix–vector product,
\[
\mathbf C^{(h)}_{i}
      =\bigoplus_{j=1}^{N}\;
       \mathbf S^{(h)}_{ij}\;\odot\;\mathbf v^{(h)}_{j}
      =\max_{j}\bigl\{\mathbf S^{(h)}_{ij}+\mathbf v^{(h)}_{j} \bigr\},
      \qquad i\in[N].
\]
The tropical context picks the value that best aligns projectively with the query. Then, the contexts per head, will be mapped to the Euclidean domain via a smooth inverse map (devaluation ) \(\psi(z)=\exp(z)\), and concatenated back to the original dimension, $\mathbf H =\bigl[\psi(\mathbf C^{(1)})\;\|\;\dots\;\|\;\psi(\mathbf C^{(H)})\bigr]\in\mathbb R^{N\times d}.$
\end{definition}

\paragraph{Why Tropical Attention?}
Every operation inside MHTA is piecewise linear and aligned with tropical geometry (not just tropical arithmetic). Hence the entire network computes a tropical polygonal map whose cells are polyhedral cut out by hyperplanes. This is aligned with combinatorial algorithms, whose solutions correspond to a vertex of a polytope and every decision boundary is a facet. Training a transformer with Tropical attention therefore starts from a hypothesis space that already mirrors the solution structure of a combinatorial algorithms. That is what we call a \textbf{\textit{polyhedral inductive bias}}.
By contrast, Euclidean softmax attention inserts an exponential map, blurring the sharp decisions on their input data. Classical transformers modulate the entropy–versus–sharpness trade-off through a temperature parameter in the softmax; MHTA sharpness is built in and temperature-free. Moreover, since every intermediate representation of MHTA lies in the projective simplex $\Delta^{d-1}$, going through $\arg\max$ is well-defined (no equal maxima except on a set of measure 0) and is stable by global scaling meaning that shifting the entire vector by $\lambda\in\mathbb{R}^d$ does not alter which index attains the maximum. In other words, only relative relations between inputs matter, thus Tropical attention is inherently robust against distribution shifts.

Furthermore, each MHTA head can function as a tropical gate in a \textbf{tropical circuit}. A tropical circuit is a finite acyclic digraph whose input vertices store either a variable or a non-negative real constant, while every internal vertex has in-degree two and outputs the \emph{maximum} or the \emph{sum} of its two predecessors.  The circuit’s size is the number of internal gates. Classical pure DP algorithms are recursive tropical circuits of this kind; consequently, lower bounds for tropical circuits translate directly into limits for such DP schemes. An MHTA head can also be interpreted as a single tropical gate. A single head implements the composite transformation $(u,v)\longmapsto\max_{j}\{\,S_{ij}+v_{j}\,\},$ where the score $S_{ij}$ itself is obtained through several applications of $\max$ and $+$ gates. The outer maximization provides the $\oplus$-gate, while the summand $v_{j}$ furnishes a $\odot$-gate acting on two variable inputs. Thus every head is a compact, differentiable wrapper around the two tropical primitives, and a full multi-head layer is simply a collection of such gates operating in parallel on a shared input tape, creating a \textbf{tropical transitive closure}. Training a multi-layer MHTA therefore amounts to discovering how these gates should be wired together, rather than coaxing a Euclidean softmax kernel to emulate max–plus algebra indirectly. 
As a result of developing MHTA, we prove that it is a universal approximator of max-plus dynamic programming for combinatorial optimization with closure (Theorem \ref{thm:poly}, Corollary \ref{cor:depthL}, and Theorem \ref{thm:dp}). 

\begin{theorem}[Simulation of max–plus dynamic programs]
\label{thm:dp}
Let $(S,E)$ be a finite directed acyclic graph with $|S|=N$ vertices
and edge weights $\{w_{uv}\}_{(u,v)\in E}\subset\mathbb T$. Fix a
source vertex $v_0\in S$ and consider the max–plus Bellman recursion
\[
d_v(t+1)=\bigoplus_{u:\,(u,v)\in E}\bigl(w_{uv}\odot d_u(t)\bigr),
\qquad
d_v(0)=\delta_{v,v_0},
\quad
t\in\mathbb N.
\]
\end{theorem}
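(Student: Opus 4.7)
The plan is a direct constructive simulation: I represent each DAG vertex by one input token, arrange the tropical projections so that a single Multi-Head Tropical Attention (MHTA) layer carries out one pass of the Bellman recursion exactly, and then stack as many layers as the length of the longest path to reach the terminal state. Because every primitive inside MHTA is already max-plus, no algebraic approximation is required; the work is purely combinatorial encoding.

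\textbf{Key steps in order.} First, I fix a token embedding \(\mathbf Z\in\mathbb{TP}^{N\times d}\) with $d=N+1$: token $v$ stores a tropical one-hot indicator on the first $N$ coordinates (a $0$ at position $v$ and a large negative sentinel, playing the role of $-\infty$, elsewhere) together with the current value $d_v(t)$ in the last coordinate. Second, I design the learnable max-plus matrices \(\mathbf W_Q,\mathbf W_K,\mathbf W_V\) so that \(\mathbf v^{(h)}_{j}\) recovers the scalar $d_j(t)$ (one head suffices for this), while the query--key pairs produce a tropical Hilbert score
\[
\mathbf S_{ij}=-d_{\mathbb H}(\mathbf q_i,\mathbf k_j)=
\begin{cases} w_{ji}+c & \text{if } (j,i)\in E,\\ -M & \text{otherwise,}\end{cases}
\]
for some constant $c$ (absorbed by projective equivalence on $\Delta^{d-1}$) and a mask constant $M$ much larger than $\max|w|$. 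Encoding $(j,i)\in E$ versus $\notin E$ is done by padding the projected query/key with two extra coordinates whose difference is $0$ exactly when the edge exists (so that the $\max-\min$ range in \(d_{\mathbb H}\) collapses to the weight contribution) and explodes otherwise. Third, I read off the head output
\[
\mathbf C_i=\bigoplus_{j=1}^{N}\mathbf S_{ij}\odot \mathbf v_j
=\max_{j:(j,i)\in E}\{w_{ji}+d_j(t)\}=d_i(t+1),
\]
which is the Bellman update verbatim. Finally, a tropical identity feed-forward (achievable since $\psi\circ\phi_\lambda$ can be chosen to implement the identity on the bounded cell of interest by Lemma \ref{lem:phi-valuation}) refreshes the last coordinate to $d_i(t+1)$ while leaving the indicator intact, so $L$ stacked MHTA blocks compute $d_v(L)$ for every $v$ in parallel, and initialisation with the one-hot at $v_0$ reproduces the prescribed boundary condition.

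\textbf{Main obstacle.} The genuine difficulty is not the value-aggregation step, which matches the Bellman update by design, but the construction that forces the Hilbert metric to read out the edge weight $w_{ji}$ without distortion. Because \(d_{\mathbb H}\) is range-based and projectively invariant, arbitrary real weights must be expressed through differences of coordinates of $\mathbf q_i-\mathbf k_j$ while simultaneously using a second pair of coordinates to gate non-edges to an effectively $-\infty$ score. I expect to spend most of the argument verifying that a single set of max-plus projections $(\mathbf W_Q,\mathbf W_K)$ can realise both the weight-encoding and the adjacency-masking roles for every pair $(i,j)$ simultaneously; this is where the width of the embedding ($d=\Theta(N)$) and, if needed, a second head for the mask channel, enter the count.

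\textbf{Packaging.} Combining the per-layer simulation with the polynomial size bound from Theorem \ref{thm:poly} and the depth bound from Corollary \ref{cor:depthL} yields that MHTA reproduces the Bellman iterate $d_v(L)$ with $O(L)$ layers, $O(N)$ width, and no super-polynomial blow-up, completing the claim that MHTA captures every max-plus dynamic program on a DAG.
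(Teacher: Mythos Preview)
Your route diverges from the paper's, and the divergence is exactly where your argument is thin. The paper does \emph{not} try to encode the edge weights in the Hilbert score at all. Instead it allocates $N$ heads, one per vertex $v$, and invokes the head-level weighted $\oplus$-gate (Lemma~\ref{lem:max-gate}): for head $h=v$ the weights $w_{uv}$ are baked into the \emph{value} projection $\mathbf W_V^{(v)}$ so that $\mathbf v^{(v)}_{u}=d_u(t)+w_{uv}$, while the query/key are chosen so that $s^{(v)}_{v,u}=0$ for $u$ with $(u,v)\in E$ and $s^{(v)}_{v,r}=-\infty$ otherwise. The context at token $v$ is then literally $\max_{u:(u,v)\in E}\{d_u(t)+w_{uv}\}=d_v(t{+}1)$, and stacking $T$ such layers finishes the proof in two lines. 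The ``$N$ heads per layer'' in the theorem statement is not incidental; it is the mechanism.

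Your proposal instead tries to realise the entire weighted adjacency through the Hilbert score of a \emph{single} head: you want $-d_{\mathbb H}(\mathbf q_i,\mathbf k_j)=w_{ji}+c$ on edges and $-M$ off edges, with $\mathbf q_i,\mathbf k_j$ produced by one pair $(\mathbf W_Q,\mathbf W_K)$ acting tropically on a one-hot-plus-value token. This is precisely the step you flag as the obstacle, and it is a real gap. With head dimension $d_k=2$ one has $d_{\mathbb H}(\mathbf q_i,\mathbf k_j)=|\alpha_i-\beta_j|$ for scalars $\alpha_i,\beta_j$, so only a one-parameter family of weight matrices is reachable; more generally the map $(i,j)\mapsto\operatorname{range}(\mathbf q_i-\mathbf k_j)$ carries at most $2Nd_k$ free parameters against up to $N^2$ independent targets, so your stated width $d=N{+}1$ with ``two extra coordinates'' in the projected vectors cannot encode an arbitrary $\{w_{uv}\}$. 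Even granting $d_k=\Theta(N)$, you would still need to exhibit explicit $\mathbf W_Q,\mathbf W_K$ making the range functional hit all prescribed values simultaneously, and nothing in the proposal does this. The paper's per-vertex-head construction avoids the issue entirely by putting weights in values (where they enter additively and independently) rather than in the score (where they must pass through the nonlinear $\max-\min$ range).
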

Theorem \ref{thm:dp}, Theorem \ref{thm:poly}, and Theorems \ref{thm:poly2} and \ref{th:mhtakleene} show \emph{upper bounds}, i.e., sufficient conditions, of $T$ and $N$ such that a stack of $T$ MHTA layers and $N$ heads can approximate any horizon tropical circuit for a dynamic program. 

\begin{remark}
A shallow MHTA is sufficiently expressive to approximate the tropical transitive-closure map, and thereby avoids explicit recurrence, to encode a very rich polyhedral geometry of combinatorial algorithmic reasoning into attention and provide a non-recurrent (one-shot) solution. 
However, in the worst-case scenario, the non-recurrent representation requires head-width proportional to the number of active path monomials up to $\mathcal{O}(n^2 2^{k})$. Depth-(T) stacks realize the same computation with polynomial resources by distributing the computation across layers.
\end{remark}

In other words, Tropical Attention has sufficient capacity to approximate closure by absorbing the feasible solutions into its tropical polyhedral stratification, thus avoiding explicit recurrence. \textbf{Consequently, a MHTA module learns the closure directly rather than implementing the recurrence step by step.} By contrast, Recurrent Transformers such as the Universal Transformer~\cite{dehghani2018universal} incorporate a depth-wise recurrence to learn the hidden state representation per token. The recurrent function evolves in parallel across token positions while exchanging information through self-attention. In the next section, Section~\ref{sec:experiments}, we show that, Tropical Transformer provides a stronger out-of-distribution generalization than Universal Transformer with Dynamic Halting mechanism, while delivering substantially faster inference with much fewer parameters.

\vspace{-0.25em}
\section{Experiments}
\label{sec:experiments}
We evaluate Tropical transformers on eleven combinatorial tasks (see~\ref{app:datasets}), several of which are NP-hard/complete problems, and the the Long Range Arena (LRA) benchmark~\cite{taylong}, a standard for testing transformers on long-sequence tasks across text, image, and math domains. For each combinatorial task we measure three complementary forms of out-of-distribution (OOD) generalization: Length OOD (longer inputs), Value OOD (unseen magnitudes), and noise robustness (perturbed inputs). A procedure to compare between vanilla attention and Tropical Attention is described in Appendix \ref{sec:comparison}. For our experiments we custom generated both train and test datasets following procedures from the canonical algorithmic reasoning benchmark CLRS~\cite{markeeva2024clrs}. This decision was due largely to the absence of NP-hard and NP-complete problems in the CLRS benchmark, but also because our framework is designed for sequence and set-based data modalities and our OOD evaluation includes two extra new evaluation pipelines, adversarial perturbations and value generalization. All datasets, generation scripts, and OOD protocols are described in Appendix \ref{app:datasets} and \ref{app:training}. 

For our experiment we consider three variants, (i) \textbf{Vanilla}: Standard transformer encoder with softmax dot-product attention. (ii) \textbf{Adaptive}: Transformer equipped with adaptive softmax attention from \cite{veličković2024softmaxforsharpoutofdistribution}. (iii) \textbf{Tropical}, which every attention block is replaced by MHTA. For length OOD tests, we also compare with $32$-step Universal Transformer (UT) with dynamic halting, as a recurrent attention model, under the vanilla softmax and adaptive-temperature softmax kernels. 
To ensure a fair comparison, all variants share identical backbone hyperparameters: depth, width, and number of heads. The only architectural difference is the attention kernel. Crucially, no model sees OOD examples during optimization. We follow a uniform procedure in which each model is trained from scratch under the same training regime with task specific fixed input sequence lengths and value ranges. 

\textbf{Out-of-Distribution Protocols} In order to assess OOD generalization, we construct three stress tests: (i) \textbf{Length OOD} – inputs drawn from the same value range but with longer input sequence lengths. (ii) \textbf{Value OOD} – the input sequence lengths are fixed and the values are sampled from an increasingly large range (for example, if the models trained on inputs sampled from the range $[-5,5]$ an out of distribution evaluation would be inputs sampled from the range $[-10,10]$). (iii) \textbf{Perturbative noise OOD} – the input sequence lengths are fixed and the values are from the same input range, but a subset of the input values are perturbed randomly.

\section{Results and Discussion}\label{sec:results}
Section \ref{subsec: self-attention} elaborated why and how softmax self-attention - and its descendants - are incapable of generalizing to OOD inputs in combinatorial problems, and Section \ref{sec:tropical-attention} discussed \emph{why} Tropical Attention can generalize in the combinatorial regime. With our experimentation, we seek to show \emph{if} and, if so, \emph{how} Tropical Attention generalizes in this domain.

\begin{wraptable}{r}{0.5\linewidth}
  \vspace{-1.0ex}
  \centering
  \small
  \setlength{\tabcolsep}{2pt}
  \renewcommand{\arraystretch}{1.0}
  \caption{Average inference time per sample across all tasks and parameter count.}
  \label{tab:inference_avg}
  \begin{tabular}{lccc}
    \toprule
    \textbf{Model} & \textbf{CPU (ms)} & \textbf{GPU (ms)} & \textbf{Params.} \\
    \midrule
    Vanilla UT w/ ACT     & $6.285$ & $0.027$ & $50{,}242$ \\
    Adaptive UT w/ ACT    & $7.898$ & $0.018$ & $50{,}242$ \\
    Tropical Transformer  & \bm{$1.949$} & \bm{$0.003$} & \bm{$40{,}961$} \\
    \bottomrule
  \end{tabular}
  \vspace{-1.0ex}
\end{wraptable}

To answer \emph{if} Tropical Attention generalizes, we report the numerical results of our experimentation in Tables~\ref{tab:ood_len_f1_mse} and~\ref{tab:ood_value_noise_columns}. The Tropical attention architecture achieves superior OOD performance to both the Vanilla and Adaptive softmax attention. Notably, this out performance can be seen in both regression and classification combinatorial tasks and across OOD protocols, validating our theoretical results from Section \ref{sec:tropical-attention}. The Tropical architecture's ability to generalize well across OOD protocols and problem sets, especially the notorious Quickselect, suggests that instead of simply learning the specific data it is trained on, these purpose-built models learn the underlying polyhedral structure of the combinatorial algorithm. 
The results, from Tables~\ref{tab:ood_len_f1_mse} and ~\ref{tab:inference_avg}, demonstrate two key findings. MHTA shows a strong performance, even when compared against an iterative attention class of transformers with a dynamic Adaptive Computation Time (ACT) mechanism where they can approximate an algorithmic closure, the Tropical Attention model still achieves better OOD performance across all algorithmic tasks. The more interesting results are that our model achieves these results while being on average $\mathbf{3\times}$-$\mathbf{9\times}$ \textbf{faster at inference} and using \textbf{$\mathbf{20\%}$ fewer parameters} than the Universal Transformer baselines.
Evaluating on the Long Range Arena (LRA) benchmark as shown in Table~\ref{tab: lra}, Tropical Transformer achieves highly competitive, State-of-the-art~(SOTA) results, placing \emph{second} overall in average accuracy across the benchmark's tasks. This performance shows that the benefits of Tropical Attention stand as a viable and powerful mechanism for general-purpose sequence modeling.

\begin{table}[htb!]
  \centering
  \footnotesize
  \setlength{\tabcolsep}{4.2pt}
  \renewcommand{\arraystretch}{1.1}
  \caption{Out-of-distribution performance under \textbf{Value OOD} and \textbf{Perturbative Noise} tests. Top: Micro-$\mathrm{F}_1$ for classification tasks; Bottom: MSE for regression tasks.}
  \label{tab:ood_value_noise_columns}
  \begin{tabular}{lcccccc}
    \toprule
    \multirow{3}{*}{Algorithmic Tasks} &
    \multicolumn{3}{c}{\textbf{Value OOD}} &
    \multicolumn{3}{c}{\textbf{Perturbative Noise}} \\
    \cmidrule(lr){2-4}\cmidrule(lr){5-7}
    & \texttt{Vanilla} & \texttt{Adaptive} & \texttt{Tropical} & \texttt{Vanilla} & \texttt{Adaptive} & \texttt{Tropical} \\
    \midrule
    \textsc{ConvexHull}         & $22.75\std{3.59}$ & $23.77\std{3.10}$ & \bm{$\underline{34.25}\std{1.71}$} & $90.75\std{2.22}$ & $91.00\std{2.16}$ & \bm{$\underline{96.00}\std{2.16}$} \\
    \textsc{Knapsack}           & $38.87\std{3.43}$ & $26.92\std{1.33}$ & \bm{$\underline{49.67}\std{2.01}$} & $67.85\std{3.19}$ & $68.36\std{3.51}$ & \bm{$\underline{74.67}\std{3.13}$} \\
    \textsc{Quickselect}        & $74.22\std{2.30}$ & \bm{$74.30\std{1.99}$} & $71.10\std{3.11}$ & $33.87\std{7.11}$ & $34.82\std{4.79}$ & \bm{$\underline{57.22}\std{5.01}$} \\
    \textsc{BinPacking}         & $67.26\std{3.70}$ & $74.23\std{1.51}$ & \bm{$\underline{78.54}\std{1.89}$} & $55.38\std{5.10}$ & $60.64\std{3.92}$ & \bm{$\underline{61.19}\std{4.33}$} \\
    \textsc{SCC}                & $78.51\std{3.08}$ & \bm{$81.38\std{2.62}$} & $74.86\std{5.01}$ & $70.00\std{5.98}$ & \bm{$71.33\std{1.96}$} & $69.86\std{4.17}$ \\
    \textsc{SubsetSum}          & $34.75\std{6.60}$ & $28.50\std{10.12}$ & \bm{$\underline{79.25}\std{5.38}$} & $3.75\std{1.50}$ & $3.00\std{1.63}$ & \bm{$\underline{72.75}\std{10.01}$} \\
    \textsc{BalancedPartition}  & \bm{$63.40\std{4.29}$} & $56.57\std{1.18}$ & $55.76\std{5.63}$ & $51.06\std{2.66}$ & $57.06\std{1.08}$ & \bm{$\underline{57.29}\std{1.33}$} \\
    \textsc{3SUM}               & $26.00\std{3.16}$ & \bm{$26.25\std{3.50}$} & $22.00\std{2.16}$ & $47.50\std{9.47}$ & $49.25\std{9.22}$ & \bm{$\underline{65.25}\std{3.59}$} \\
    \textsc{MinCoinChange}      & $23.64\std{4.07}$ & $2.20\std{1.13}$ & \bm{$\underline{33.18}\std{5.64}$} & $22.12\std{2.75}$ & $18.44\std{4.49}$ & \bm{$\underline{33.75}\std{4.89}$} \\
    \midrule
    \textsc{Floyd--Warshall}    & $87.68\std{5.65}$ & $56.30\std{3.04}$ & \bm{$\underline{55.30}\std{4.36}$} & $7.54\std{3.63}$ & $5.29\std{2.56}$ & \bm{$\underline{4.39}\std{1.62}$} \\
    \textsc{FractionalKnapsack} & $0.24\std{0.12}$  & $0.17\std{0.03}$  & \bm{$\underline{0.08}\std{0.03}$} & $0.05\std{0.02}$ & $0.03\std{0.01}$ & \bm{$\underline{0.02}\std{0.01}$} \\
    \bottomrule
  \end{tabular}
\end{table}

In order to understand \emph{how} Tropical Attention outperforms, we explore the tropical Attention maps relative to vanilla and adaptive attention maps for both Quickselect and Knapsack. On the \textsc{Quickselect} task the goal is to find the $k$-th smallest elements. For such tasks, maintaining focus means the ability to allocate a high attention score to the correct items, creating sharp spikes in the heatmap. Contrarily, losing focus means the attention uniformly disperses across elements, with no single item receiving a high score. Figure \ref{fig:qselect_attn}, which depicts attention maps as sequence length increases for all models, shows that the vanilla and adaptive models exhibit attention fading. In contrast, the Tropical Attention consistently shows bright, distinct bands, indicating it continues to allocate sharp attention even at larger length sequences. 
Figure \ref{fig:quick-atten}, replicating visualizations from~\cite{veličković2024softmaxforsharpoutofdistribution}, depicts a normalized attentional head for the Quickselect task for a batch of 32 sets, over the 8 items with the largest keys by the $\ell_2$-norm. If the head operates correctly, it must allocate sharp attention to the position of $k$-th smallest element. Again, we see that the attention on both softmax models quickly dilute/disperse as sequence length grows OOD while the tropical Attention maintains focus.

Similarly, Figure \ref{fig:knap-atten} depicts length OOD on the full attention head for the \textsc{Knapsack} problem, a classic dynamic program corresponding to tropical circuits. Each model begins sharp in distribution, but the Tropical Attention head maintains the same activation pattern across each input length, strongly suggesting it has learned and internalized the underlying combinatorial problem and polyhedral structure vice the specific training data indicated by the vanilla and adaptive fading attention patterns. 

\begin{figure}[htb!]
  \centering
  \begin{subfigure}[t]{0.95\linewidth}
    \includegraphics[width=\linewidth]{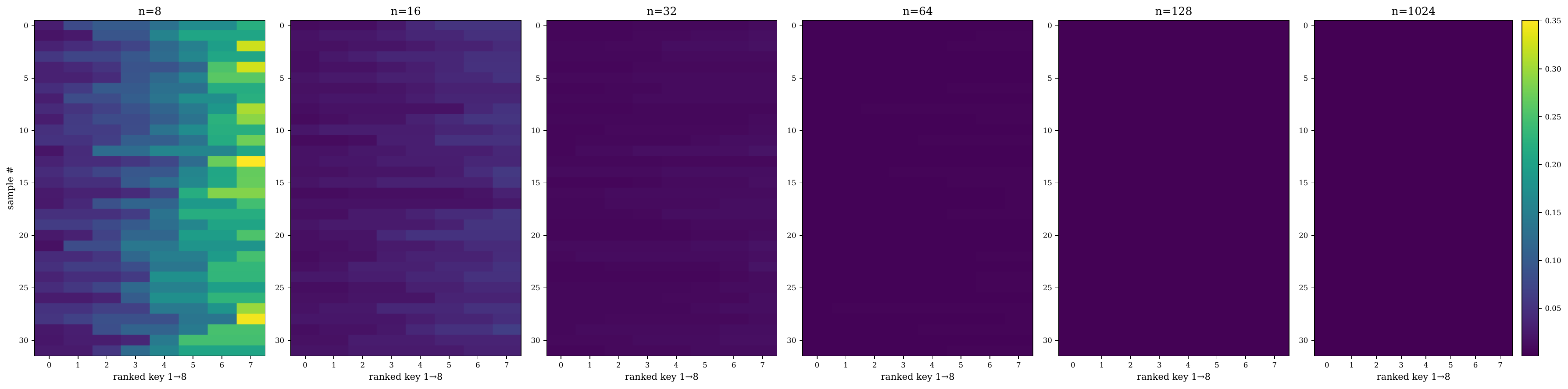}
  \end{subfigure}

  \begin{subfigure}[t]{0.95\linewidth}
    \includegraphics[width=\linewidth]{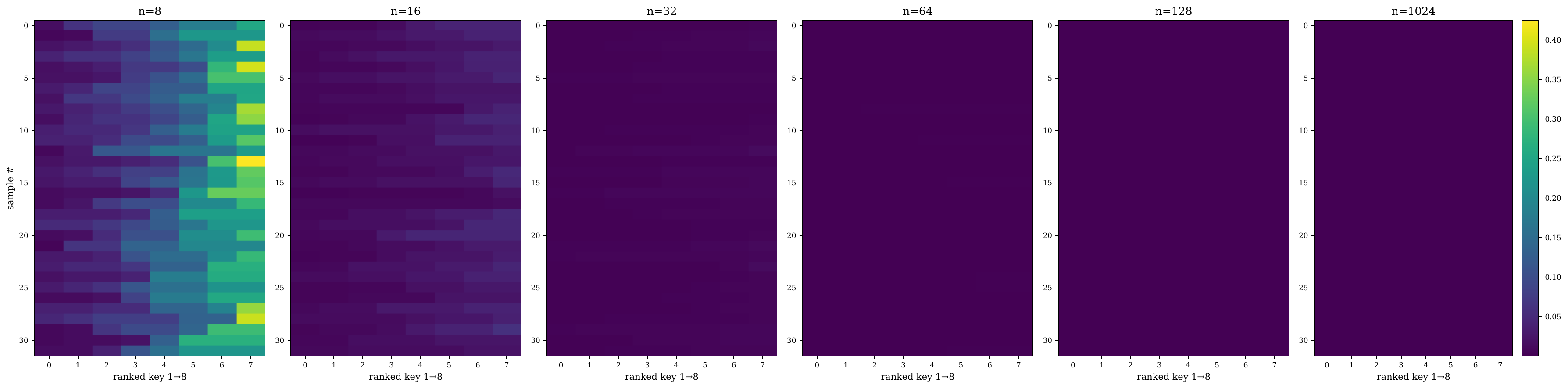}
  \end{subfigure}

  \begin{subfigure}[t]{0.95\linewidth}
    \includegraphics[width=\linewidth]{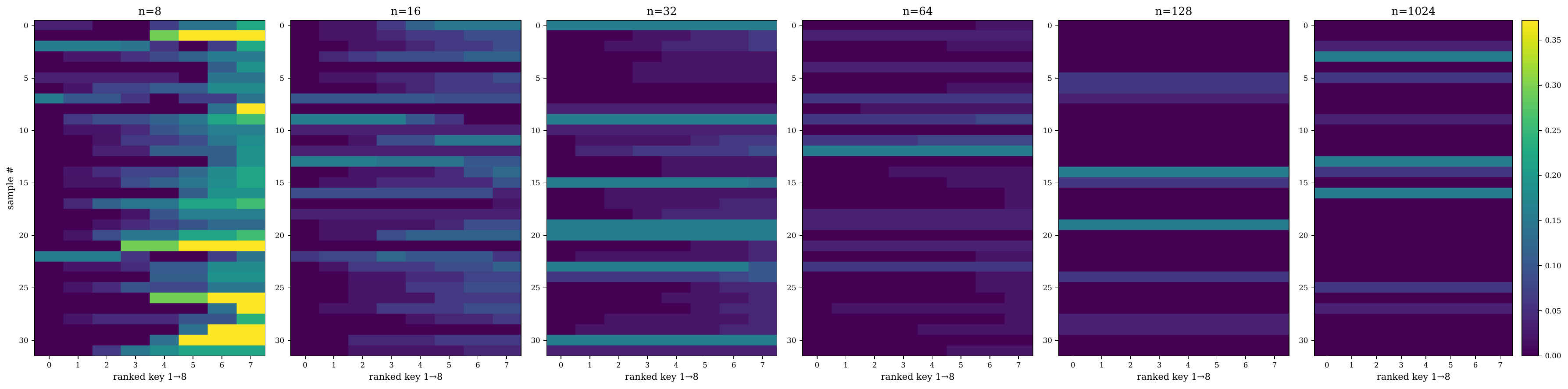}
  \end{subfigure}
  \caption{Stacked attention head representations for \textsc{Quickselect} under (a) Vanilla, (b) Adaptive, and (c) \textbf{Tropical} models. Each model was trained on length 8 sequences and was evaluated from Left to Right on length 16 to 1024 sequences. Each image was generated by a batch of 32 inputs. The columns are the 8 largest keys by $\ell_2$-norm. Heatmap values are the attention of the row item at the column key. }
  \label{fig:quick-atten}
\end{figure}

\begin{table}[htb!]
\centering
\small
\setlength{\tabcolsep}{4.5pt}
\caption{We report classification accuracy for each task and the average accuracy across all tasks. All results are taken from their respective papers, except for Adaptive Softmax, which we re-implemented and evaluated.}
\label{tab: lra}
\begin{tabular}{@{}lccccc | c|c@{}}
\toprule
\textbf{Models} & \textbf{ListOps} & \textbf{Text} & \textbf{Retrieval} & \textbf{Image} & \textbf{Pathfinder} & \textbf{Avg.} & \textbf{Complexity} \\ \midrule
\texttt{Transformer~\cite{vaswani2023attentionneed}}           & 36.37 & 64.27 & 57.46 & 42.44 & 71.40 & 54.39 & $\mathcal{O}(n^2)$ \\ 
\texttt{Longformer~\cite{beltagy2020longformer}}           & 35.63 & 62.85 & 56.89 & 42.22 & 69.71 & 53.46 & $\mathcal{O}(n)$ \\ 
\texttt{Linformer~\cite{wang2020linformer}}             & 35.70 & 53.94 & 52.27 & 38.56 & 76.34 & 51.36 & $\mathcal{O}(n)$ \\ 
\texttt{Performer~\cite{choromanski2020rethinking}}             & 18.01 & 65.40 & 53.82 & 42.77 & 77.50 & 51.41 & $\mathcal{O}(n)$ \\ 
\texttt{Elliptical~\cite{nielsen2024elliptical}}              & 37.8 & 65.6 & 80.3 & 40.2 & 73.2 & 61.24  & $\mathcal{O}(n^2)$\\ 
\texttt{Fourierformer~\cite{nguyen2022transformer}} &   40.73    &  75.02   &   \underline{85.35}   &   53.17    &    83.43   &  67.54 & $\mathcal{O}(n\log n)$     \\ 
\texttt{MEGA}~\cite{mamega} &   \underline{63.14}   &   \textbf{90.43}    &   \textbf{91.25}    &   \textbf{90.44}    &    \underline{96.01}   &  \textbf{86.25} & $\mathcal{O}(n\log n)$  \\ 
\texttt{AdaptiveSoft.~\cite{veličković2024softmaxforsharpoutofdistribution}}               & 47.15 & \underline{75.52} & 79.56 & 51.58 & 80.94 & 66.95 & $\mathcal{O}(n^2)$ \\
\midrule
\texttt{Tropical Transformer} &   \textbf{68.65}   &   70.13    &   64.82   &   \underline{60.04}    &   \textbf{97.33}   &  \underline{72.79}  & $\mathcal{O}(n^2)$ \\ 
\bottomrule
\end{tabular}
\end{table}

\begin{figure*}[htb!]
  \centering
  \begin{subfigure}[b]{0.95\linewidth}
    \includegraphics[width=\linewidth]{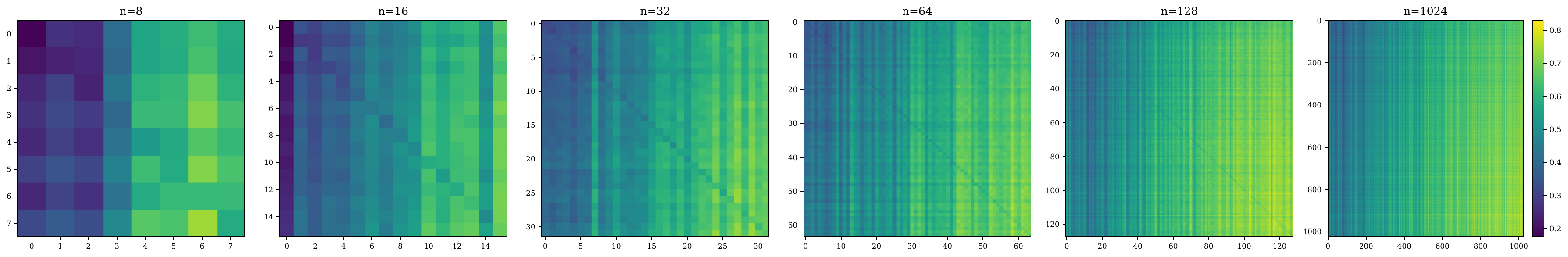}
  \end{subfigure}
  \hfill
  \begin{subfigure}[b]{0.95\linewidth}
    \includegraphics[width=\linewidth]{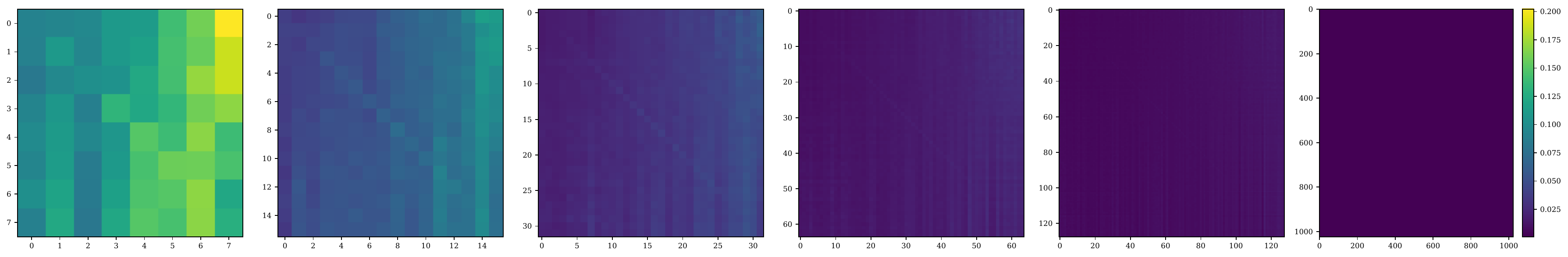}
  \end{subfigure}
  \hfill
  \begin{subfigure}[b]{0.95\linewidth}
    \includegraphics[width=\linewidth]{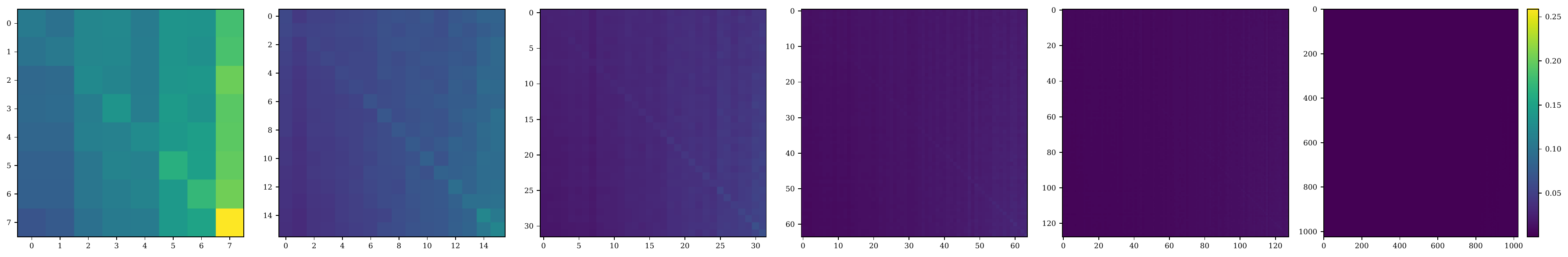}
  \end{subfigure}
  \caption{
    (top) {\bf Tropical Attention} with sharp attention maps on learning the \textsc{Knapsack} algorithm, showcasing a size-invariance and OOD lengths generalization behavior far beyond training ($16 \rightarrow 1024$). In contrast, both (middle) adaptive-softmax and (bottom) vanilla-softmax heads dilute and disperse as sequence length grows, failing to generalize.
  }
  \label{fig:knap-atten}
\end{figure*}

\section{Conclusion}\label{sec:conclusion}
We introduced Tropical Attention, replacing softmax‐normalized dot–product attention with an attention mechanism that operates in tropical projective space. On the theory side, we showed that multi-head Tropical Attention (MHTA) simulates tropical circuits and realizes tropical transitive closure via finite-depth compositions, with polynomial resource bounds (Theorem \ref{thm:poly}, Corollary \ref{cor:depthL}, Theorem \ref{thm:dp}). These guarantees provide a principled account of scale-invariance and sharp, polyhedral decision boundaries, properties that are essential for reasoning models expected to generalize beyond their training distribution. Empirically, across various combinatorial problems, Tropical transformer achieved SOTA out-of-distribution generalization, and delivered stronger noise robustness, while being much faster at inference with fewer parameters than the recurrent/iterative attention baselines.
These findings carry an important message for both neural algorithmic reasoning (NAR) and Large Reasoning Model (LRM) communities: \textbf{tropicalization of reasoning and going beyond softmax not only enriches the algorithmic power of attention mechanisms but also yields tangible gains on reasoning tasks.} We believe Tropical Attention opens compelling avenues for hybrid semiring architectures and for leveraging tropical geometry to reason over discrete structures within deep learning systems. Future work will explore sparse tropical kernels and applications to graph-theoretic domains, aiming for ever-stronger generalization guarantees in neural algorithm and reasoning synthesis.

\begin{ack}
This research was supported by the Excellence Cluster ORIGINS, funded by the Deutsche Forschungsgemeinschaft (DFG, German Research Foundation) under Germany’s Excellence Strategy – EXC-2094-390783311. B.H extends his gratitude to the organizers and the wonderful instructors, Marta Panizzut and Margarida Melo, of the 2024 Trieste Algebraic Geometry Summer School (TAGSS) on Tropical Geometry, where the idea of the project was sparked.
K.P., C.T. and R.Y. are partially supported by  NSF Division of Mathematical Sciences: Statistics Program DMS 2409819. 
\end{ack}

{
\small

\bibliographystyle{unsrt}
\bibliography{refs}

\begin{thebibliography}{10}

\bibitem{sturmfelsmaclagan}
Diane Maclagan and Bern Sturfels.
\newblock {\em Introduction to Tropical Geometry}.
\newblock American Mathematical Society, 2015.

\bibitem{ardila2007tropicalhyperplanearrangementsoriented}
Federico Ardila and Mike Develin.
\newblock Tropical hyperplane arrangements and oriented matroids, 2007.

\bibitem{ardilamatroid}
Federico Ardila-Mantilla, Christopher Eur, and Raul Penaguiao.
\newblock The tropical critical points of an affine matroid.
\newblock {\em SIAM Journal on Discrete Mathematics}, 38(2):1930--1942, 2024.

\bibitem{fink2015stiefeltropicallinearspaces}
Alex Fink and Felipe Rincón.
\newblock Stiefel tropical linear spaces, 2015.

\bibitem{joswigbook}
Michael Joswig.
\newblock {\em Essentials of Tropical Convexity}.
\newblock American Mathematical Society, 2021.

\bibitem{pasque2024tropicaldecisionboundariesneural}
Kurt Pasque, Christopher Teska, Ruriko Yoshida, Keiji Miura, and Jefferson Huang.
\newblock Tropical decision boundaries for neural networks are robust against adversarial attacks, 2024.

\bibitem{yoshidatropnn}
Ruriko Yoshida, Georgios Aliatimis, and Keiji Miura.
\newblock Tropical neural networks and its applications to classifying phylogenetic trees.
\newblock In {\em 2024 International Joint Conference on Neural Networks (IJCNN)}, pages 1--9, 2024.

\bibitem{yoshida2017tropicalprincipalcomponentanalysis}
Ruriko Yoshida, Leon Zhang, and Xu~Zhang.
\newblock Tropical principal component analysis and its application to phylogenetics, 2017.

\bibitem{9394420}
Petros Maragos, Vasileios Charisopoulos, and Emmanouil Theodosis.
\newblock Tropical geometry and machine learning.
\newblock {\em Proceedings of the IEEE}, 109(5):728--755, 2021.

\bibitem{brandenburg2024real}
Marie-Charlotte Brandenburg, Georg Loho, and Guido Mont{\'u}far.
\newblock The real tropical geometry of neural networks.
\newblock {\em arXiv preprint arXiv:2403.11871}, 2024.

\bibitem{pmlr-v80-zhang18i}
Liwen Zhang, Gregory Naitzat, and Lek-Heng Lim.
\newblock Tropical geometry of deep neural networks.
\newblock In Jennifer Dy and Andreas Krause, editors, {\em Proceedings of the 35th International Conference on Machine Learning}, volume~80 of {\em Proceedings of Machine Learning Research}, pages 5824--5832. PMLR, 10--15 Jul 2018.

\bibitem{hertrich2023relu}
Christoph Hertrich and Leon Sering.
\newblock Relu neural networks of polynomial size for exact maximum flow computation.
\newblock In {\em International Conference on Integer Programming and Combinatorial Optimization}, pages 187--202. Springer, 2023.

\bibitem{bacciu2023tropical}
Davide Bacciu, Francesco Landolfi, and Danilo Numeroso.
\newblock A tropical view of graph neural networks.
\newblock {\em ESANN 2023 proceedings}, 2023.

\bibitem{gaubert2011tropical}
Stéphane Gaubert.
\newblock Tropical considerations in dynamic programming.
\newblock Presentation at Optimization, Games, and Dynamics, Institut Henri Poincaré, Paris, November 2011.
\newblock Based on joint work with Akian, Guterman, Allamigeon, Katz, Vigeral, McEneaney, and Qu.

\bibitem{Jukna_2014}
Stasys Jukna.
\newblock Lower bounds for tropical circuits and dynamic programs.
\newblock {\em Theory of Computing Systems}, 57(1):160–194, October 2014.

\bibitem{galkin2024medium}
Michael Galkin.
\newblock {Graph} \& {Geometric} {ML} in 2024: Where we are and what's next, 2024.
\newblock Medium Article.

\bibitem{dehghani2018universal}
Mostafa Dehghani, Stephan Gouws, Oriol Vinyals, Jakob Uszkoreit, and {\L}ukasz Kaiser.
\newblock Universal transformers.
\newblock {\em arXiv preprint arXiv:1807.03819}, 2018.

\bibitem{Joswig_2022}
Michael Joswig and Benjamin Schröter.
\newblock Parametric shortest-path algorithms via tropical geometry.
\newblock {\em Mathematics of Operations Research}, 47(3):2065–2081, August 2022.

\bibitem{mohri2002semiring}
Mehryar Mohri.
\newblock Semiring frameworks and algorithms for shortest-distance problems.
\newblock {\em Journal of Automata, Languages and Combinatorics}, 7(3):321--350, 2002.

\bibitem{hofner2012dijkstra}
Peter H{\"o}fner and Bernhard M{\"o}ller.
\newblock Dijkstra, floyd and warshall meet kleene.
\newblock {\em Formal Aspects of Computing}, 24(4):459--476, 2012.

\bibitem{cygan2019problems}
Marek Cygan, Marcin Mucha, Karol W{\k{e}}grzycki, and Micha{\l} W{\l}odarczyk.
\newblock On problems equivalent to (min,+)-convolution.
\newblock {\em ACM Transactions on Algorithms (TALG)}, 15(1):1--25, 2019.

\bibitem{chan2022more}
Timothy~M Chan and Qizheng He.
\newblock More on change-making and related problems.
\newblock {\em Journal of Computer and System Sciences}, 124:159--169, 2022.

\bibitem{martello2000new}
Silvano Martello, David Pisinger, and Paolo Toth.
\newblock New trends in exact algorithms for the 0--1 knapsack problem.
\newblock {\em European Journal of Operational Research}, 123(2):325--332, 2000.

\bibitem{bringmann2023faster}
Karl Bringmann and Alejandro Cassis.
\newblock Faster 0-1-knapsack via near-convex min-plus-convolution.
\newblock {\em arXiv preprint arXiv:2305.01593}, 2023.

\bibitem{axiotis2018capacitated}
Kyriakos Axiotis and Christos Tzamos.
\newblock Capacitated dynamic programming: Faster knapsack and graph algorithms.
\newblock {\em arXiv preprint arXiv:1802.06440}, 2018.

\bibitem{vaswani2023attentionneed}
Ashish Vaswani, Noam Shazeer, Niki Parmar, Jakob Uszkoreit, Llion Jones, Aidan~N. Gomez, Lukasz Kaiser, and Illia Polosukhin.
\newblock Attention is all you need, 2023.

\bibitem{veličković2024softmaxforsharpoutofdistribution}
Petar Veličković, Christos Perivolaropoulos, Federico Barbero, and Razvan Pascanu.
\newblock softmax is not enough (for sharp out-of-distribution), 2024.

\bibitem{nakanishi2025}
Ken~M. Nakanishi.
\newblock Scalable-softmax is superior for attention, 2025.

\bibitem{sun2022lengthextrapolat}
Yutao Sun, Li~Dong, Barun Patra, Shuming Ma, Shaohan Huang, Alon Benhaim, Vishrav Chaudhary, Xia Song, and Furu Wei.
\newblock A length-extrapolatable transformer, 2022.

\bibitem{press2022trainshorttestlong}
Ofir Press, Noah~A. Smith, and Mike Lewis.
\newblock Train short, test long: Attention with linear biases enables input length extrapolation, 2022.

\bibitem{duan2024interpolat}
Shaoxiong Duan, Yining Shi, and Wei Xu.
\newblock From interpolation to extrapolation: Complete length generalization for arithmetic transformers, 2024.

\bibitem{marchetti2025invitation}
Giovanni~Luca Marchetti, Vahid Shahverdi, Stefano Mereta, Matthew Trager, and Kathl{\'e}n Kohn.
\newblock An invitation to neuroalgebraic geometry.
\newblock {\em arXiv e-prints}, pages arXiv--2501, 2025.

\bibitem{tiezzi2024resurgence}
Matteo Tiezzi, Michele Casoni, Alessandro Betti, Tommaso Guidi, Marco Gori, and Stefano Melacci.
\newblock On the resurgence of recurrent models for long sequences--survey and research opportunities in the transformer era.
\newblock {\em arXiv preprint arXiv:2402.08132}, 2024.

\bibitem{Veličković2020Neural}
Petar Veličković, Rex Ying, Matilde Padovano, Raia Hadsell, and Charles Blundell.
\newblock Neural execution of graph algorithms.
\newblock In {\em International Conference on Learning Representations}, 2020.

\bibitem{deluca2025positional}
Artur~Back de~Luca, George Giapitzakis, Shenghao Yang, Petar Veličković, and Kimon Fountoulakis.
\newblock Positional attention: Expressivity and learnability of algorithmic computation, 2025.

\bibitem{8819706}
Pawan~Kumar Pradhan, Sayan Rakshit, and Sujoy Datta.
\newblock Lattice based cryptography : Its applications, areas of interest \& future scope.
\newblock In {\em 2019 3rd International Conference on Computing Methodologies and Communication (ICCMC)}, pages 988--993, 2019.

\bibitem{nerem2025graphneural}
Robert~R. Nerem, Samantha Chen, Sanjoy Dasgupta, and Yusu Wang.
\newblock Graph neural networks extrapolate out-of-distribution for shortest paths, 2025.

\bibitem{mahdavi2023betteroutof}
Sadegh Mahdavi, Kevin Swersky, Thomas Kipf, Milad Hashemi, Christos Thrampoulidis, and Renjie Liao.
\newblock Towards better out-of-distribution generalization of neural algorithmic reasoning tasks, 2023.

\bibitem{li2025circuittransf}
Xihan Li, Xing Li, Lei Chen, Xing Zhang, Mingxuan Yuan, and Jun Wang.
\newblock Circuit transformer: A transformer that preserves logical equivalence, 2025.

\bibitem{dudzik2022graph}
Andrew~Joseph Dudzik and Petar Veli{\v{c}}kovi{\'c}.
\newblock Graph neural networks are dynamic programmers.
\newblock In Alice~H. Oh, Alekh Agarwal, Danielle Belgrave, and Kyunghyun Cho, editors, {\em Advances in Neural Information Processing Systems}, 2022.

\bibitem{dosovitskiy2021imageworth16x16words}
Alexey Dosovitskiy, Lucas Beyer, Alexander Kolesnikov, Dirk Weissenborn, Xiaohua Zhai, Thomas Unterthiner, Mostafa Dehghani, Matthias Minderer, Georg Heigold, Sylvain Gelly, Jakob Uszkoreit, and Neil Houlsby.
\newblock An image is worth 16x16 words: Transformers for image recognition at scale, 2021.

\bibitem{rodionov2025discrete}
Gleb Rodionov and Liudmila Prokhorenkova.
\newblock Discrete neural algorithmic reasoning, 2025.

\bibitem{markeeva2024clrs}
Larisa Markeeva, Sean McLeish, Borja Ibarz, Wilfried Bounsi, Olga Kozlova, Alex Vitvitskyi, Charles Blundell, Tom Goldstein, Avi Schwarzschild, and Petar Veličković.
\newblock The clrs-text algorithmic reasoning language benchmark, 2024.

\bibitem{ong2022learnable}
Euan Ong and Petar Veličković.
\newblock Learnable commutative monoids for graph neural networks, 2022.

\bibitem{xuan2025exploring}
Hao Xuan, Bokai Yang, and Xingyu Li.
\newblock Exploring the impact of temperature scaling in softmax for classification and adversarial robustness, 2025.

\bibitem{kim2023exploring}
Gihyun Kim, Juyeop Kim, and Jong-Seok Lee.
\newblock Exploring adversarial robustness of vision transformers in the spectral perspective, 2023.

\bibitem{jukna2023tropical}
Stasys Jukna.
\newblock Tropical circuit complexity.
\newblock {\em Limits of Pure Dynamic Programming/by Stasys Jukna.-}, 2023.

\bibitem{nielsen2024ellipticalattention}
Stefan~K. Nielsen, Laziz~U. Abdullaev, Rachel S.~Y. Teo, and Tan~M. Nguyen.
\newblock Elliptical attention, 2024.

\bibitem{kedia2024transformer}
Akhil Kedia, Mohd~Abbas Zaidi, Sushil Khyalia, Jungho Jung, Harshith Goka, and Haejun Lee.
\newblock Transformers get stable: An end-to-end signal propagation theory for language models, 2024.

\bibitem{Veli_kovi__2021}
Petar Veličković and Charles Blundell.
\newblock Neural algorithmic reasoning.
\newblock {\em Patterns}, 2(7):100273, July 2021.

\bibitem{li2024openbook}
Hefei Li, Chao Peng, Chenyang Xu, and Zhengfeng Yang.
\newblock Open-book neural algorithmic reasoning.
\newblock In {\em The Thirty-eighth Annual Conference on Neural Information Processing Systems}, 2024.

\bibitem{rodionov2023neural}
Gleb Rodionov and Liudmila Prokhorenkova.
\newblock Neural algorithmic reasoning without intermediate supervision.
\newblock In {\em Thirty-seventh Conference on Neural Information Processing Systems}, 2023.

\bibitem{georgiev2023neural}
Dobrik~Georgiev Georgiev, Danilo Numeroso, Davide Bacciu, and Pietro Lio.
\newblock Neural algorithmic reasoning for combinatorial optimisation.
\newblock In {\em The Second Learning on Graphs Conference}, 2023.

\bibitem{georgiev2024deep}
Dobrik~Georgiev Georgiev, JJ~Wilson, Davide Buffelli, and Pietro Lio.
\newblock Deep equilibrium algorithmic reasoning.
\newblock In {\em The Thirty-eighth Annual Conference on Neural Information Processing Systems}, 2024.

\bibitem{bohde2024on}
Montgomery Bohde, Meng Liu, Alexandra Saxton, and Shuiwang Ji.
\newblock On the markov property of neural algorithmic reasoning: Analyses and methods.
\newblock In {\em The Twelfth International Conference on Learning Representations}, 2024.

\bibitem{xu2024recurrent}
Kaijia Xu and Petar Veli{\v{c}}kovi{\'c}.
\newblock Recurrent aggregators in neural algorithmic reasoning.
\newblock In {\em The Third Learning on Graphs Conference}, 2024.

\bibitem{bounsi2025transformers}
Wilfried Bounsi, Borja Ibarz, Andrew~Joseph Dudzik, Jessica~B Hamrick, Larisa Markeeva, Alex Vitvitskyi, Razvan Pascanu, and Petar Veli{\v{c}}kovi{\'c}.
\newblock Transformers meet neural algorithmic reasoners, 2025.

\bibitem{ibarz2022a}
Borja Ibarz, Vitaly Kurin, George Papamakarios, Kyriacos Nikiforou, Mehdi Bennani, R{\'o}bert Csord{\'a}s, Andrew~Joseph Dudzik, Matko Bo{\v{s}}njak, Alex Vitvitskyi, Yulia Rubanova, Andreea Deac, Beatrice Bevilacqua, Yaroslav Ganin, Charles Blundell, and Petar Veli{\v{c}}kovi{\'c}.
\newblock A generalist neural algorithmic learner.
\newblock In {\em The First Learning on Graphs Conference}, 2022.

\bibitem{jang2024graph}
Yunhui Jang, Dongwoo Kim, and Sungsoo Ahn.
\newblock Graph generation with \$k{\textasciicircum}2\$-trees.
\newblock In {\em The Twelfth International Conference on Learning Representations}, 2024.

\bibitem{hashemi2025can}
Baran Hashemi, Roderic~Guigo Corominas, and Alessandro Giacchetto.
\newblock Can transformers do enumerative geometry?
\newblock In {\em The Thirteenth International Conference on Learning Representations}, 2025.

\bibitem{pmlr-v198-ibarz22a}
Borja Ibarz, Vitaly Kurin, George Papamakarios, Kyriacos Nikiforou, Mehdi Bennani, R{\'o}bert Csord{\'a}s, Andrew~Joseph Dudzik, Matko Bo{\v s}njak, Alex Vitvitskyi, Yulia Rubanova, Andreea Deac, Beatrice Bevilacqua, Yaroslav Ganin, Charles Blundell, and Petar Veli{\v c}kovi{\' c}.
\newblock A generalist neural algorithmic learner.
\newblock In Bastian Rieck and Razvan Pascanu, editors, {\em Proceedings of the First Learning on Graphs Conference}, volume 198 of {\em Proceedings of Machine Learning Research}, pages 2:1--2:23. PMLR, 09--12 Dec 2022.

\bibitem{dudzik2024asynchronous}
Andrew~Joseph Dudzik, Tamara von Glehn, Razvan Pascanu, and Petar Veli{\v{c}}kovi{\'c}.
\newblock Asynchronous algorithmic alignment with cocycles.
\newblock In {\em Learning on Graphs Conference}, pages 3--1. PMLR, 2024.

\bibitem{taylong}
Yi~Tay, Mostafa Dehghani, Samira Abnar, Yikang Shen, Dara Bahri, Philip Pham, Jinfeng Rao, Liu Yang, Sebastian Ruder, and Donald Metzler.
\newblock Long range arena: A benchmark for efficient transformers.
\newblock In {\em International Conference on Learning Representations}, 2020.

\bibitem{beltagy2020longformer}
Iz~Beltagy, Matthew~E Peters, and Arman Cohan.
\newblock Longformer: The long-document transformer.
\newblock {\em arXiv preprint arXiv:2004.05150}, 2020.

\bibitem{wang2020linformer}
Sinong Wang, Belinda~Z Li, Madian Khabsa, Han Fang, and Hao Ma.
\newblock Linformer: Self-attention with linear complexity.
\newblock {\em arXiv preprint arXiv:2006.04768}, 2020.

\bibitem{choromanski2020rethinking}
Krzysztof Choromanski, Valerii Likhosherstov, David Dohan, Xingyou Song, Andreea Gane, Tamas Sarlos, Peter Hawkins, Jared Davis, Afroz Mohiuddin, Lukasz Kaiser, et~al.
\newblock Rethinking attention with performers.
\newblock {\em arXiv preprint arXiv:2009.14794}, 2020.

\bibitem{nielsen2024elliptical}
Stefan Nielsen, Laziz Abdullaev, Rachel~SY Teo, and Tan Nguyen.
\newblock Elliptical attention.
\newblock {\em Advances in Neural Information Processing Systems}, 37:109748--109789, 2024.

\bibitem{nguyen2022transformer}
Tan Nguyen, Minh Pham, Tam Nguyen, Khai Nguyen, Stanley~J Osher, and Nhat Ho.
\newblock Transformer with fourier integral attentions.
\newblock {\em arXiv preprint arXiv:2206.00206}, 2022.

\bibitem{mamega}
Xuezhe Ma, Chunting Zhou, Xiang Kong, Junxian He, Liangke Gui, Graham Neubig, Jonathan May, and Luke Zettlemoyer.
\newblock Mega: Moving average equipped gated attention.
\newblock {\em arXiv preprint arXiv:2209.10655}, 2022.

\bibitem{talbut2024probabilitymetricstropicalspaces}
Roan Talbut, Daniele Tramontano, Yueqi Cao, Mathias Drton, and Anthea Monod.
\newblock Probability metrics for tropical spaces of different dimensions, 2024.

\bibitem{AKIAN_2012}
MARIANNE AKIAN, STÉPHANE GAUBERT, and ALEXANDER GUTERMAN.
\newblock Tropical polyhedra are equivalent to mean payoff games.
\newblock {\em International Journal of Algebra and Computation}, 22(01):1250001, February 2012.

\bibitem{nussbaum1986convexity}
Roger~D. Nussbaum.
\newblock Convexity and log convexity for the spectral radius.
\newblock {\em Linear Algebra and its Applications}, 73:59--122, 1986.
\newblock Department of Mathematics, Rutgers University.

\bibitem{joswig2022parametric}
Michael Joswig and Benjamin Schr{\"o}ter.
\newblock Parametric shortest-path algorithms via tropical geometry.
\newblock {\em Mathematics of Operations Research}, 47(3):2065--2081, 2022.

\bibitem{pytorch}
Adam Paszke, Sam Gross, Francisco Massa, Adam Lerer, James Bradbury, Gregory Chanan, Trevor Killeen, Zeming Lin, Natalia Gimelshein, Luca Antiga, Alban Desmaison, Andreas Köpf, Edward Yang, Zach DeVito, Martin Raison, Alykhan Tejani, Sasank Chilamkurthy, Benoit Steiner, Lu~Fang, Junjie Bai, and Soumith Chintala.
\newblock Pytorch: An imperative style, high-performance deep learning library, 2019.

\bibitem{scipy}
Wes McKinney.
\newblock Data structures for statistical computing in python.
\newblock In {\em Proceedings of the 9th Python in Science Conference}, SciPy '10, pages 51 -- 56. SciPy, 2010.

\bibitem{scikit-learn}
F.~Pedregosa, G.~Varoquaux, A.~Gramfort, V.~Michel, B.~Thirion, O.~Grisel, M.~Blondel, P.~Prettenhofer, R.~Weiss, V.~Dubourg, J.~Vanderplas, A.~Passos, D.~Cournapeau, M.~Brucher, M.~Perrot, and E.~Duchesnay.
\newblock Scikit-learn: Machine learning in {P}ython.
\newblock {\em Journal of Machine Learning Research}, 12:2825--2830, 2011.

\bibitem{numpy}
Charles~R. Harris, K.~Jarrod Millman, St{\'{e}}fan~J. van~der Walt, Ralf Gommers, Pauli Virtanen, David Cournapeau, Eric Wieser, Julian Taylor, Sebastian Berg, Nathaniel~J. Smith, Robert Kern, Matti Picus, Stephan Hoyer, Marten~H. van Kerkwijk, Matthew Brett, Allan Haldane, Jaime~Fern{\'{a}}ndez del R{\'{i}}o, Mark Wiebe, Pearu Peterson, Pierre G{\'{e}}rard-Marchant, Kevin Sheppard, Tyler Reddy, Warren Weckesser, Hameer Abbasi, Christoph Gohlke, and Travis~E. Oliphant.
\newblock Array programming with {NumPy}.
\newblock {\em Nature}, 585(7825):357--362, September 2020.

\end{thebibliography}



\appendix

\section{Limitations} 
\label{subsec: limit}
Although Tropical Attention is out performing in almost all algorithmic tasks, this study was conducted on combinatorial algorithms and we have not yet demonstrated how Tropical transformers can scale to perform on generative and autoregressive next-token prediction language tasks. In particular, the computational and memory overhead introduced by tropical operations and the tropical Hilbert metric could incur nontrivial runtime costs or scaling challenges.

\section{Tropical Geometry}
\label{subsec: tropgeom}
The most fundamental component of tropical algebraic geometry is the \emph{tropical semiring} $\mathbb{T} \coloneqq \big( \mathbb{R} \cup \{-\infty \}, \oplus, \odot \big)$.
The two operations $\oplus$ and $\odot$, called \emph{tropical addition} and \emph{tropical multiplication} respectively, are defined as follows.
\begin{definition}
For $x,y \in \mathbb{R}$, their \emph{tropical sum} is $x \oplus y \coloneqq \max \{x, y \}$;
their \emph{tropical product} is $ x \odot y \coloneqq x+y$;
the \emph{tropical quotient} of $x$ over $y$ is $x \oslash y \coloneqq x -y$.
\end{definition}
For any $x \in \mathbb{R}$, we have $-\infty \oplus x =  0 \odot x = x$ and $-\infty \odot x = -\infty$.
Thus $-\infty$ is the tropical additive identity and $0$ is the tropical multiplicative identity.
Furthermore, these operations satisfy the usual laws of arithmetic, namely associativity, commutativity, and distributivity. The set $\mathbb{R} \cup \{-\infty \}$ is therefore a semiring under the operations $\oplus$ and  $\odot$. While it is not a ring since it lacks an additive inverse, one may nonetheless generalize many algebraic objects over the tropical semiring, the study of these, in a nutshell, constitutes the subject of tropical algebra. In order to have a transition from classical arithmetic to tropical arithmetic we need a series of transition maps, which is referred to as \emph{tropicalization}.

\begin{definition}(The valuation map)
Let \(d\in\mathbb{N}\) and write $\mathbb{R}^{d}$ the field of real numbers. A valuation on $\mathbb{R}$ is a function $\operatorname{val}\colon \mathbb{R} \rightarrow \mathbb{R} \cup -\infty $ satisfying the following three axioms:
\begin{enumerate}
    \item $\operatorname{val}(a) = -\infty \iff a=0$;
    \item $\operatorname{val}(ab) = \operatorname{val}(a) + \operatorname{val}(b)$;
    \item $\operatorname{val}(a + b) \leq \max \{ \operatorname{val}(a), \operatorname{val}(b) \}~\forall~ a, b \in \mathbb{R}.$
\end{enumerate}
\end{definition}

One approach to tropical geometry, is to define a tropical variety as a shadow of an algebraic variety that involves logarithmic limit sets. Classically, the \emph{amoeba} of a variety is its image under taking the coordinate-wise
logarithm of the absolute value of any point on the variety \cite{gaubert2011tropical}. The logarithm turns ordinary multiplication into
tropical addition:
\[
\operatorname{val}(x\!\odot\!y)=\operatorname{val}(x)+\operatorname{val}(y),
\qquad
x,y\in\mathbb{R}_{>0},
\]
and satisfies the sub‑additive inequality
\(\operatorname{val}(x+y)\le\max\{\operatorname{val}(x),\operatorname{val}(y)\}+\log 2\).  
Hence \(\operatorname{val}\) is a Archimedean log map up to a harmless
additive constant. For an input \(X\subset\mathbb{R}^{d}\) we call
\[
\operatorname{Trop}(X)\;:=\;\operatorname{val}(X)\subset\mathbb{T}^{d}
\]
its \emph{tropicalization}. All subsequent reasoning, including attention weight computations, will take place in this max‑plus space. When \(X\) is a smooth manifold, \(\operatorname{Trop}(X)\) is typically a curved domain
whose ``tentacles'' encode asymptotic directions of \(X\).  Passing to the max‑plus algebra straightens those curves into polyhedral pieces, providing the piecewise‑linear structure on which our Tropical Attention operates.

\begin{definition} (The tropical projective space \cite{sturmfelsmaclagan}.)
We regard \(\mathbb{T}^{d}\) as a semimodule over the tropical semiring by coordinate‑wise operations.  Introduce
\[
\mathbf{1}_{d+1}:=(1,\dots,1)\in\mathbb{R}^{d+1},
\qquad
\mathbb{T}^{d+1}:=(\mathbb{R}\cup\{-\infty\})^{d+1}\setminus\{-\infty\}^{d+1}.
\]
Declare two points \(x,y\in\mathbb{T}^{d+1}\) \emph{projectively equivalent}, written \(x\sim y\), if there is a scalar \(\lambda\in\mathbb{R}\) such that
\(y=x+\lambda\mathbf{1}_{d+1}\). The quotient
\[
\mathbb{TP}^{d}\;:=\;\mathbb{T}^{d+1}\big/\!\sim
\]
is the \emph{tropical projective space}. See \cite{sturmfelsmaclagan} for more details on tropical geometry.
\end{definition}

Every class has a unique representative with maximal coordinate equal to \(0\), so \(\mathbb{TP}^{d}\) identifies with the standard simplex \(\Delta^{d}:=\{w\in\mathbb{R}^{d+1}\mid\max_{i}w_{i}=0\}\). Attention weights produced by the softmax surrogate live in the Euclidean simplex; Tropical Attention will instead output points of \(\Delta^{d}\) interpreted tropically, guaranteeing sharp \(\arg\max\) behavior.

\begin{definition} \label{def:hilbert}(The tropical Hilbert projective metric.)
For \(x:=(x_1, \ldots , x_{d+1}),y:=(y_1, \ldots , y_{d+1})\in\mathbb{T}^{d+1}\) put
\[
d_{\mathbb{H}}(x,y)
\;:=\;
\bigl(\max_{i}(x_{i}-y_{i})\bigr)\;-\;\bigl(\min_{i}(x_{i}-y_{i})\bigr)
\;=\;
\operatorname{diam}\bigl(x\oslash y\bigr),
\]
where \(x\oslash y\) denotes the coordinate‑wise tropical quotient
\((x_{1}-y_{1},\dots,x_{d+1}-y_{d+1})\) and \(\operatorname{diam}\)
its range. 
\end{definition}
The metric descends to \(\mathbb{TP}^{d}\) and enjoys two key
properties:

\begin{enumerate}
\item \textbf{Projective invariance.}\;
      \(d_{\mathbb{H}}(x+c\mathbf{1}_{d+1},y+c\mathbf{1}_{d+1})
      =d_{\mathbb{H}}(x,y)\) for all \(c\in\mathbb{R}\).
\item \textbf{Non‑expansiveness of max‑plus‑affine maps \cite{talbut2024probabilitymetricstropicalspaces}.} \;
      Every tropical linear map \(A:\mathbb{T}^{d+1}\to\mathbb{T}^{m+1}\)
      is \(1\)-Lipschitz:
      \(d_{\mathbb{H}}\!\bigl(Ax,Ay\bigr)\le d_{\mathbb{H}}(x,y)\).
\end{enumerate}

These facts, due to Nussbaum and further developed by Akian–Gaubert, furnish tight robustness guarantees, perturbing the inputs by \(\epsilon\) in Hilbert distance changes the output of any compositional stack of tropical linear layers by at most \(\epsilon\)~\cite{AKIAN_2012, nussbaum1986convexity}.

\section{Proofs}\label{app:proofs}
\begin{proof}[Proof of lemma \ref{lem:phi-valuation}]
If $\phi_{\lambda}$ is a valuation map, hence for all $a,b\in\mathbb R$, we have 
\begin{enumerate}
\item $v_\lambda(0)=-\infty$;
\item $v_\lambda(ab)=v_\lambda(a)+v_\lambda(b)-\lambda$;
\item $v_\lambda(a+b)\;\le\;\max\{\,v_\lambda(a),\,v_\lambda(b)\}$.
\end{enumerate}
Property (i) is immediate from the definition. For $a,b>0$, (ii) follows from $\log(ab)=\log a+\log b$:
\[
v_\lambda(ab)=\log(ab)-\lambda
             =\bigl(\log a-\lambda\bigr)+\bigl(\log b-\lambda\bigr)+\lambda
             =v_\lambda(a)+v_\lambda(b)-\lambda.
\]
If either factor is non‑positive, both sides equal $-\infty$.
For (iii), note that when $a,b>0$ we have
\[
\log(a+b)\;\le\;\max\{\log a,\log b\}+\log 2,
\]
so subtracting $\lambda$ preserves the inequality; if $a\le0$ or $b\le0$ the claim is trivial. Adding back the constant $\lambda$ to $v_\lambda$ eliminates the offset in (ii) while leaving (i)–(iii) unchanged, yielding the classical valuation $\widetilde v$.  Collecting the $d$ coordinate‑wise maps gives the vector‑valued projection $\phi_\lambda:\mathbb R^{d}\to\Delta^{d-1}$, which is therefore a valuation map up to the projective (constant‑shift) equivalence native to tropical geometry.
\end{proof}

The main theorem here establishes that MHTA is an expressive tropically universal approximator of max-plus dynamic programming for combinatorial optimization such that every function that can be computed by a finite max–plus circuit admits a realization by a
finite-depth MHTA stack. The proof proceeds in three stages.  First we show that a single head can act as a tropical {\em $\max$ gate}. Second, we demonstrate that an $H$-head block can realize a tropical map by computing finitely many such maxima in parallel. Finally, we prove by structural induction that stacking a finite number of blocks suffices to emulate an arbitrary max–plus circuit. With the first lemma we want to show that a single head can realize a {\em weighted tropical $\max$ gate}. 

\begin{lemma}[Head–level Weighted $\oplus$ gate]
\label{lem:max-gate}
Let $J$ be a finite index set and let $\{x_j\}_{j\in J}\subset\mathbb T$ and $\{w_j\}_{j\in J}\subset\mathbb T$. There exists an attention head $h^\ast\in[H]$, a query–token index $i^\ast\in[N]\setminus\{t(j)\mid j\in J\}$, and distinct
seq indices $t(j)\in[N]$ such that, after one forward pass, the context returned at $i^\ast$ is equal to
\begin{equation}\label{eq:wmax}
c^{(h^\ast)}_{i^\ast}
\;=\;
\bigoplus_{j\in J}\bigl(x_j\odot w_j\bigr)
\;=\;
\max_{j\in J}\bigl\{\,x_j+w_j\,\bigr\}.
\end{equation}
\end{lemma}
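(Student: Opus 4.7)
The plan is to prove the lemma by explicit construction. I would pick one of the $H$ heads, call it $h^*$, to serve as the gate, and reserve $|J|+1$ distinct sequence positions: a ``query slot'' $i^*$ and ``argument slots'' $\{t(j)\}_{j\in J}$, with any remaining tokens to be made inert. I would then design $(\mathbf W_Q^{(h^*)},\mathbf W_K^{(h^*)},\mathbf W_V^{(h^*)})$ so that the value pathway carries $x_j$ at slot $t(j)$, the attention-score pathway encodes $w_j$, and Definition \ref{def:mh-ta} evaluates the context at $(h^*,i^*)$ to $\bigoplus_{j\in J}(x_j\odot w_j)$.

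The value projection is the easy half: because a max-plus linear map can select and shift any single coordinate of the token embedding, I can arrange $\mathbf v^{(h^*)}_{t(j)}=(x_j+C)\,\mathbf 1_{d_k}$ for a constant $C$ to be chosen below. The attention-score pathway is the delicate half, because $S^{(h^*)}_{i^*,t(j)}=-d_{\mathbb H}(\mathbf q^{(h^*)}_{i^*},\mathbf k^{(h^*)}_{t(j)})\le 0$ by the non-negativity of the Hilbert diameter. I therefore embed the query/key in a two-coordinate head space with $\mathbf q^{(h^*)}_{i^*}=(0,0)$ and $\mathbf k^{(h^*)}_{t(j)}=(w_j-C,\,0)$, so that $\mathbf q^{(h^*)}_{i^*}-\mathbf k^{(h^*)}_{t(j)}=(C-w_j,\,0)$ has Hilbert diameter $C-w_j$ whenever $C\ge\max_{j\in J}w_j$. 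Then the score is $w_j-C$, and $S^{(h^*)}_{i^*,t(j)}+\mathbf v^{(h^*)}_{t(j)}=(x_j+w_j)\,\mathbf 1_{d_k}$ coordinate-wise, so taking $\bigoplus$ over $j$ yields the desired weighted max on every coordinate.

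The main obstacle is precisely this non-negativity of the Hilbert metric: no query-key pair can produce a positive attention score, which blocks the naïve encoding $S=w_j$ whenever some $w_j>0$. I would resolve it by the auxiliary shift $C\ge\max_{j\in J}w_j$, which is harmless because it is absorbed by the complementary shift $+C$ built into the value projection; the head parameters are allowed to depend on a bound on $\{w_j\}$, which is consistent with an existence (expressivity) statement of this kind. Finally, to guarantee that the ``inert'' tokens $t\notin\{t(j)\}\cup\{i^*\}$ never win the outer $\bigoplus$, I would either push their keys to arbitrarily large Hilbert distance from $\mathbf q^{(h^*)}_{i^*}$ (so their score drops to $-\infty$) or set their value coordinates to the tropical zero $-\infty$. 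Combining the three ingredients yields $\mathbf c^{(h^*)}_{i^*}=\max_{j\in J}\{(w_j-C)+(x_j+C)\}=\bigoplus_{j\in J}(x_j\odot w_j)$, which is equation \eqref{eq:wmax}, and establishes the base-case ``$\oplus$-gate'' that subsequent layers will compose in the proof of Theorem \ref{thm:dp}.
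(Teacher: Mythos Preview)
Your construction is correct and establishes the lemma, but it routes the computation differently from the paper. The paper's proof places the \emph{entire} weighted term in the value vector, setting $\mathbf v^{(h^\ast)}_{t(j)}:=x_j\odot w_j$ and $\mathbf v^{(h^\ast)}_r:=-\infty$ for irrelevant $r$, and then arranges for all relevant scores to be exactly zero by making the query and the relevant keys identical (so $d_{\mathbb H}=0$); irrelevant keys are pushed to large Hilbert distance. Your version instead splits the load: $x_j$ (shifted by $+C$) goes in the value, $w_j$ (shifted by $-C$) is encoded in the score via the Hilbert diameter. The paper's route is shorter because it never has to confront the non-negativity of $d_{\mathbb H}$---the auxiliary constant $C$ you introduce to force $w_j-C\le 0$ becomes unnecessary when the target score is simply $0$. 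On the other hand, your decomposition keeps ``data'' and ``weights'' in separate pathways, which is closer to the intended semantics of attention and foreshadows the application in Theorem~\ref{thm:dp}, where the $w_{uv}$ are fixed edge weights and the $x_u=d_u(t)$ change layer to layer; there it is natural to bake the $w$'s into key parameters rather than into the values. Both constructions are valid for the existence claim.
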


\begin{proof}
For a fix $h^\ast$ and $i^\ast$, for every $j\in J$, let's select a distinct token position $t(j)$. Then one can define the {\em value} vectors by
\(
\mathbf v^{(h^\ast)}_{\,t(j)}:=x_j\odot w_j
\)
and
$\mathbf v^{(h^\ast)}_{r}:=-\infty$ for all $r\notin\{t(j)\}$. To enforce~\eqref{eq:wmax} it suffices to make $s^{(h^\ast)}_{i^\ast\,t(j)}=0$ for $j\in J$ and $s^{(h^\ast)}_{i^\ast\,r}=-\infty$ otherwise, because then
\(
c^{(h^\ast)}_{i^\ast}
=
\bigoplus_{j\in J}
\bigl(0\odot(x_j\odot w_j)\bigr)
=
\max_{j\in J}(x_j+w_j).
\)

One can write every query / key vector in block form $u=(u^{(1)},u^{(2)})\in\mathbb T^{d_k-1}\times\mathbb T$. Fix arbitrary first blocks
$u^{(1)}$ and arrange
\[
\mathbf q^{(h^\ast)}_{i^\ast}
=
\bigl(0,\dots,0,\,0\bigr),
\qquad
\mathbf k^{(h^\ast)}_{t(j)}
=
\bigl(0,\dots,0,\,0\bigr),
\quad
j\in J,
\]
so that
$d_{\mathbb H}\!\bigl(
\mathbf q^{(h^\ast)}_{i^\ast},
\mathbf k^{(h^\ast)}_{t(j)}
\bigr)=0$ and hence $s_{i^\ast\,t(j)}^{(h^\ast)}=0$. For every {\em irrelevant} token $r\notin\{t(j)\}$ set
\[
\mathbf k^{(h^\ast)}_{r}
=
\bigl(0,\dots,0,\,
-\Gamma_r\bigr),
\qquad
\Gamma_r\gg 0,
\]
so that the last coordinate differs from that of the query by
$\Gamma_r$; consequently
$d_{\mathbb H}
\bigl(\mathbf q^{(h^\ast)}_{i^\ast},
\mathbf k^{(h^\ast)}_{r}\bigr)=\Gamma_r$
and $s^{(h^\ast)}_{i^\ast\,r}=-\Gamma_r$. Choosing $\Gamma_r$ large enough drives the score to $-\infty$ in the semiring, ensuring that irrelevant tokens do not influence the context.  Equation~\eqref{eq:wmax} follows.
\end{proof}

\begin{lemma}[Tropical affine layer]\label{lem:affine-layer}
Let $A\in\mathbb T^{M\times N}$ and $b\in\mathbb T^{M}$. Embed $x=(x_1,\dots,x_N)\in\mathbb T^N$ as the values of tokens $t(1),\dots,t(N)$ and add one {\em bias token}~$i_b$ whose value is fixed to~$0$. There exists an \textsc{MHTA} layer with $H=M$ heads and $d_k=2$ such that, for each $m\in[M]$,
\[
c^{(m)}_{\,i_m}
=
\bigoplus_{j=1}^N\bigl(A_{mj}\odot x_j\bigr)
\;\oplus\;b_m,
\]
where $i_m$ is the query token of head~$m$.
\end{lemma}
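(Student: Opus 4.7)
The plan is to apply Lemma~\ref{lem:max-gate} in parallel across the $M$ heads, using each head $m\in[M]$ to realise one row of the affine map. First, I would allocate for every head $m$ a distinguished query token $i_m$ disjoint from the data tokens $\{t(j)\}_{j=1}^{N}$ and from the bias token $i_b$. The index set for head $m$ is then taken to be $J_m:=[N]\cup\{b\}$, with weights $w_j:=A_{mj}$ for $j\in[N]$ and $w_b:=b_m$, and with inputs $x_j$ carried by token $t(j)$ and $x_b=0$ carried by the bias token $i_b$. Because $x_b=0$, the contribution $x_b\odot w_b$ collapses to $b_m$, which is exactly the extra $\oplus b_m$ summand appearing in the claimed identity; the remaining summands $x_j\odot w_j = x_j + A_{mj}$ line up with those of Lemma~\ref{lem:max-gate}.

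Second, I would verify that the per-head projections $\mathbf{W}_Q^{(m)},\mathbf{W}_K^{(m)},\mathbf{W}_V^{(m)}$ with $d_k=2$ can realise the configuration required by Lemma~\ref{lem:max-gate}: the value vector at $t(j)$ equals $A_{mj}+x_j$ and the value vector at $i_b$ equals $b_m$, while the Hilbert distance vanishes on the relevant keys $\{\mathbf{k}_{t(j)}^{(m)},\mathbf{k}_{i_b}^{(m)}\}$ and is driven arbitrarily large on all irrelevant keys. Here I would exploit the freedom in the input embedding, assuming a positional code (e.g.\ a tropical one-hot block) distinguishing the tokens $t(j)$, so that the shared matrix $\mathbf{W}_V^{(m)}$ can still inject the token-specific shift $A_{mj}$ via tropical multiplication against the position slot, while a second coordinate passes the value $x_j$ through additively. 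Reusing the query/key construction from the proof of Lemma~\ref{lem:max-gate} in the remaining coordinate pair then yields $\mathbf{S}^{(m)}_{i_m,t(j)} = \mathbf{S}^{(m)}_{i_m,i_b} = 0$ and $\mathbf{S}^{(m)}_{i_m,r} = -\infty$ for any other token $r$.

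Combining the two steps, the head-$m$ context at the query token $i_m$ simplifies to
\[
\mathbf{c}^{(m)}_{i_m}
\;=\;
\max\!\Bigl\{\max_{j\in[N]}\bigl(A_{mj}+x_j\bigr),\; b_m\Bigr\}
\;=\;
\bigoplus_{j=1}^{N}\bigl(A_{mj}\odot x_j\bigr)\;\oplus\; b_m,
\]
which is exactly the desired row-$m$ output; running this construction in parallel over all $H=M$ heads produces the full tropical affine layer. The principal obstacle I anticipate is the second step: showing that a \emph{single} learnable projection $\mathbf{W}_V^{(m)}$, shared across tokens, can nonetheless supply the token-specific additive shifts $A_{mj}$. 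This is what forces the mild assumption that the input embedding carries a positional (tropical one-hot) code, supplied by the learnable affine feed-forward pre-embedding mentioned at the start of Section~\ref{sec:tropical-attention}. Once that assumption is in place, the rest of the argument is a direct reduction to the weighted $\oplus$-gate of Lemma~\ref{lem:max-gate}.
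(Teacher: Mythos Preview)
Your proposal is correct and follows essentially the same approach as the paper: for each head $m\in[M]$, invoke Lemma~\ref{lem:max-gate} to realise the $m$-th row of the tropical affine map, with the bias contributing the extra $\oplus\,b_m$ term. The only cosmetic difference is that the paper handles the bias token separately---setting its key equal to the query so that $s^{(m)}_{i_m i_b}=0$ and then loading $b_m$ through $\mathbf W_V^{(m)}$---rather than folding it into the index set $J$ as you do; the paper's proof is also terser and does not spell out the $\mathbf W_V^{(m)}$ implementation concern you (rightly) flag.
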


\begin{proof}
For $m\in[M]$ and head $h=m$, we can apply Lemma~\ref{lem:max-gate} with $J=\{1,\dots,N\}$, input $x_j$ and weights $A_{mj}$ to obtain $\bigoplus_j(A_{mj}\odot x_j)$. Let the bias relevant to every head by assigning its key identical to the query, whence $s_{i_m i_b}^{(m)}=0$ for all $m$. Then, we give it value $b_m$ {\em in head $m$ alone} via $\mathbf W_V^{(m)}$. The context becomes the maximum of $\bigoplus_j(A_{mj}\odot x_j)$ and $b_m$, completing the proof.
\end{proof}

\begin{definition}[Tropical circuit \cite{Jukna_2014}]\label{def:trop-circ}
A {\em tropical circuit} is a finite directed acyclic graph whose
source nodes are labelled by variables $z_1,\dots,z_n\in\mathbb T$ and whose internal nodes are labelled either by the operation
{\em tropical addition} $(u,v)\mapsto u\oplus v=\max\{u,v\}$
or by the operation {\em tropical multiplication} $(u,v)\mapsto u\odot v=u+v$. The circuit computes a map $f:\mathbb T^n\to\mathbb T^m$ whose $m$ outputs are designated sinks. A circuit is {\em layered} if every edge points from layer $\ell$ to layer $\ell+1$ for some topological layering $\{\mathcal L_0,\dots,\mathcal L_L\}$. We write $\operatorname{depth}(\mathcal C)=L$ and $\operatorname{size}(\mathcal C)=|\mathcal C|$ for the number of internal gates.
\end{definition}

Because tropical multiplication distributes over tropical addition, every such circuit computes a {\em tropical polynomial}, namely a tropical sum $\oplus$ of finitely many monomials, each monomial being a tropical product $\odot$ (classical summation) of a subset of the indeterminates plus a constant. A {\em tropical polynomial} in variables $z=(z_1,\dots,z_n)$ has an expression of the form
\[
P(z)
\;=\;
\bigoplus_{k=1}^{K}
\Bigl(c_k\;\odot\;
\bigodot_{j=1}^{n}z_j^{\odot e_{k j}}\Bigr)
\;=\;
\max_{k\le K}\Bigl\{\,c_k+\sum_{j=1}^{n}e_{kj}z_j\Bigr\},
\]
where $c_k\in\mathbb T$ and $e_{kj}\in\mathbb N$. Thus $P$ is {\em already} the maximum of finitely many affine forms in $z$.  Lemma~\ref{lem:affine-layer} therefore applies directly.

\begin{theorem}[Single–layer universality for tropical polynomials]
\label{thm:poly}
Let $P:\mathbb T^{n}\to\mathbb T^{m}$ be a vector‐valued tropical
polynomial map whose $m$ coordinates are $P_\ell(z)=\bigoplus_{k\le K_\ell}(A_{\ell k}\odot z)\oplus b_{\ell k}$. There exists a {\em single} \textsc{MHTA} layer with
$H=\sum_{\ell=1}^{m}K_\ell$ heads and $d_k\ge 2$ whose {\em tropical} output (the collection of all head contexts {\em
before} the de-valuation $\psi=\exp$) equals $P(z)$.
\end{theorem}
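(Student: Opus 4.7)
The plan is to parallelize the $H=\sum_{\ell=1}^{m}K_\ell$ monomial terms of $P$ across $H$ heads, devoting one head to each term. I would index the heads by pairs $(\ell,k)$ with $\ell\in[m]$ and $k\in[K_\ell]$, and for each pair invoke Lemma \ref{lem:affine-layer} with the one-row coefficient vector $A_{\ell k}$ and scalar bias $b_{\ell k}$. After choosing a dedicated query token $i_{\ell,k}$ and the projections $\mathbf W_Q^{(\ell,k)},\mathbf W_K^{(\ell,k)},\mathbf W_V^{(\ell,k)}$ as prescribed there, the head context at $i_{\ell,k}$ evaluates to exactly
\[
c^{(\ell,k)}_{i_{\ell,k}} \;=\; (A_{\ell k}\odot z)\;\oplus\;b_{\ell k}.
\]
Because different heads have independent projection matrices, the $H$ constructions do not interfere; a single shared bias token of value $0$ suffices, since the head-local bias $b_{\ell k}$ is injected purely through $\mathbf W_V^{(\ell,k)}$.

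In the second stage I would read off $P$ from the $H$ contexts by grouping them according to the coordinate index $\ell$ and taking a tropical $\max$ inside each group:
\[
P_\ell(z) \;=\; \bigoplus_{k\le K_\ell}\bigl((A_{\ell k}\odot z)\oplus b_{\ell k}\bigr) \;=\; \bigoplus_{k\le K_\ell} c^{(\ell,k)}_{i_{\ell,k}}, \qquad \ell\in[m].
\]
This is precisely the tropical-output interpretation of ``the collection of all head contexts before the de-valuation $\psi=\exp$'' used in the statement, so the grouping is absorbed into the standard multi-head aggregation and no extra layer is needed.

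The main obstacle is bookkeeping rather than mathematics. I would need to verify that $d_k\ge 2$ is enough to carry both the Hilbert-metric gating trick from Lemma \ref{lem:max-gate} (the extra coordinate that stores the penalty $\Gamma_r$ driving irrelevant-token scores to $-\infty$) and the content dimension carrying $z_j$, $A_{\ell k, j}$ and $b_{\ell k}$; and that the $N$ input tokens can simultaneously host the $n$ variable tokens, the bias token, and the $H$ distinct query positions $i_{\ell,k}$ without collision, which is a routine padding argument provided $N\ge n+1+H$. Beyond these accounting details the proof reduces to applying Lemma \ref{lem:affine-layer} $H$ times in parallel and reading off the induced vector-valued tropical polynomial, so no deeper difficulty is anticipated.
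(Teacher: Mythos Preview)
Your proposal is correct and follows essentially the same approach as the paper: allocate one head per affine term $(A_{\ell k}\odot z)\oplus b_{\ell k}$ via Lemma~\ref{lem:affine-layer} at a fresh query token $i_{\ell,k}$, then collect the $\oplus$ over $k$ for each output coordinate $\ell$. The only minor difference is that the paper introduces an explicit aggregation head per output $\ell$ (reading all tokens $i_{\ell,k}$ with score $0$) to perform the final $\max_k$ within the layer, whereas you fold that step into the interpretation of the tropical output; the head-allocation, use of the bias token, and reliance on Lemma~\ref{lem:affine-layer} are otherwise identical.
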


\begin{proof}
For each output coordinate $\ell$ one can allocate $K_\ell$ heads, one per affine term $A_{\ell k}\odot z\;\oplus\; b_{\ell k}$. Lemma~\ref{lem:affine-layer} shows that affine map in
head~$(\ell,k)$, depositing its value at a fresh query token
$i_{\ell k}$. Because the score of an irrelevant head is
$-\infty$, the contexts written to those tokens are ignored
by all other heads. Finally, putting an aggregation head per
output $\ell$ whose query token reads all tokens $i_{\ell k}$ with {\em score} $0$ and returns their $\oplus$, namely $\max_k(A_{\ell k}\odot z\oplus b_{\ell k})=P_\ell(z)$.
No de-valuation is applied inside the tropical computation, so the result equals $P(z)$ in the max–plus semiring.
\end{proof}

\begin{corollary}[Depth–$L$ universality]
\label{cor:depthL}
Let $F:\mathbb T^{n}\to\mathbb T^{m}$ be the output of a layered
tropical circuit of depth $L$. Then, there exists an \textsc{MHTA} stack of $L$ successive layers which, on every
$x\in(\mathbb R_{>0})^{n}$, produces
\[
\mathbf C^{(L)}(x)
\;=\;
F\!\bigl(\operatorname{val}(x)\bigr).
\]
\end{corollary}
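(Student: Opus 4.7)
\smallskip
\noindent\textbf{Proof plan for Corollary~\ref{cor:depthL}.}
The plan is to perform an induction on the depth $L$ of the layered circuit, using Theorem~\ref{thm:poly} as the per-layer building block and checking that the inter-layer ``de-valuation / re-valuation'' bracket composes to the identity on the tropical simplex. Fix a topological layering $\mathcal L_0,\dots,\mathcal L_L$ of the circuit $\mathcal C$ computing $F$, with $\mathcal L_0$ the source (variable) nodes and $\mathcal L_L$ the sinks. For each $\ell\ge 1$, every gate in $\mathcal L_\ell$ is either $\oplus$ or $\odot$ of two predecessors in $\mathcal L_{\ell-1}$, so the whole layer $\ell$ realizes a vector-valued map $F_\ell:\mathbb T^{n_{\ell-1}}\to\mathbb T^{n_\ell}$ whose coordinates are tropical polynomials of degree at most one (in fact each is a single monomial or a binary $\oplus$). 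Crucially, $F=F_L\circ\cdots\circ F_1$.

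\smallskip
\noindent\textbf{Base step ($L=1$).}
First I would handle the input interface: on $x\in(\mathbb R_{>0})^{n}$ the tropicalization map gives $\Phi_\lambda(x)=\log(\operatorname{ReLU}(x))-\lambda=\operatorname{val}(x)-\lambda$, which coincides with $\operatorname{val}(x)$ as a point of the tropical projective space $\mathbb{TP}^{n-1}$, so the learnable shift $\lambda$ may be absorbed (or explicitly set to $\mathbf 0$). Since $F_1$ is a tropical polynomial map, Theorem~\ref{thm:poly} produces a single MHTA layer with $H=\sum_{\ell=1}^{n_1}K_\ell^{(1)}$ heads whose \emph{pre-devaluation} context equals $F_1(\operatorname{val}(x))$, establishing the base case.

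\smallskip
\noindent\textbf{Inductive step.}
Suppose a depth-$(L{-}1)$ MHTA stack $\mathcal M_{L-1}$ satisfies $\mathbf C^{(L-1)}(x)=\psi\bigl(F_{L-1}\circ\cdots\circ F_1(\operatorname{val}(x))\bigr)$, where the outer $\psi=\exp$ is the standard devaluation that returns each layer to Euclidean space. Feed these Euclidean tokens into the tropicalization map of the $L$-th MHTA block: since $\Phi_{\lambda_L}\circ\psi(z)=\log(\exp(z))-\lambda_L=z-\lambda_L$, choosing the learnable shift $\lambda_L$ (or relying on the projective normalization that defines $\Delta^{d_k-1}$) recovers $F_{L-1}\circ\cdots\circ F_1(\operatorname{val}(x))$ as an element of $\mathbb{TP}^{n_{L-1}-1}$. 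Applying Theorem~\ref{thm:poly} to $F_L$ inside the $L$-th layer then yields the pre-devaluation context $F_L\bigl(F_{L-1}\circ\cdots\circ F_1(\operatorname{val}(x))\bigr)=F(\operatorname{val}(x))$, as required. Taking the output at the designated sink tokens and, if one insists on the pre-devaluation value as in Theorem~\ref{thm:poly}, reading $\mathbf C^{(L)}$ before $\psi$ is applied, completes the induction.

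\smallskip
\noindent\textbf{Main obstacles and bookkeeping.}
The most delicate point is not the algebra but the token-layout and interface argument: each application of Theorem~\ref{thm:poly} consumes fresh query tokens per monomial and per output, so I would reserve, in advance, distinct token positions $\{t^{(\ell)}_j\}$ for the outputs of every layer $\ell$, plus one bias token carrying value $0$ shared across all layers. The irrelevant-token suppression via the $-\Gamma$ trick in the proof of Lemma~\ref{lem:max-gate} ensures that tokens written by earlier layers do not leak into heads that should ignore them. The second subtle point is that MHTA formally exits to Euclidean space via $\psi=\exp$ between layers; I would justify the ``$\Phi_{\lambda_\ell}\circ\psi=\operatorname{id}$ on $\mathbb{TP}$'' step explicitly by noting that $\log\circ\exp$ is an exact inverse on $\mathbb R$ (no $\operatorname{ReLU}$ clipping occurs, since $\exp>0$), and that the additive $\lambda_\ell$ is precisely the degree of freedom afforded by projective equivalence, so no information is lost or gained when tropical data are piped through the Euclidean bottleneck between consecutive MHTA blocks.
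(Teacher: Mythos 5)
Your proposal is correct and takes essentially the same route as the paper: the paper's proof is exactly the iteration of Theorem~\ref{thm:poly} over the circuit's topological layers, feeding each layer's contexts into the next. The only difference is at the inter-layer interface — the paper simply stipulates that no Euclidean de-valuation occurs between MHTA layers so the data stays tropical, whereas you route through $\psi=\exp$ and then verify $\Phi_{\lambda}\circ\psi=\operatorname{id}$ (with $\lambda=0$, which is the safe choice, since appealing to projective equivalence alone would require all monomials of $F_\ell$ to have equal total degree); this is a harmless and arguably more careful treatment of the same point.
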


\begin{proof}
We can apply Theorem~\ref{thm:poly} to each $P^{(i)}$ in succession, feeding the contexts of layer $i$ (still in tropical form) as the inputs to layer $i+1$.  Because no Euclidean de-valuation occurs after all MHTA layers, the tropical composition is preserved.
\end{proof}

\begin{theorem}[Simulation of max–plus Dynamic Programs]\label{thm:poly2}
Let $(S,E)$ be a finite directed acyclic graph with
$|S|=N$ nodes and weighted edges
$\{w_{uv}\}_{(u,v)\in E}\subset\mathbb T$.
For $t\in\mathbb N$ define
\[
d_v(t+1)
\;=\;
\bigoplus_{u:\,(u,v)\in E}\Bigl(w_{uv}\odot d_u(t)\Bigr),
\qquad
d_v(0)=\delta_{v,v_0},
\]
where $v_0\in S$ is the source node. For every finite horizon $T$ there exists a \textsc{MHTA} of depth~$T$ and $N$ heads per layer such that the token values at layer~$t$ equal the vector $\bigl(d_v(t)\bigr)_{v\in S}$ for all $t\le T$.
\end{theorem}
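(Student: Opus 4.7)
The plan is to reduce the theorem directly to Corollary \ref{cor:depthL}, by realizing one Bellman step as a tropical linear map over the max-plus semiring and then invoking depth-$L$ universality with $L = T$. The key observation is that the recursion $d_v(t+1) = \bigoplus_{u:(u,v)\in E}(w_{uv} \odot d_u(t))$ is exactly the tropical matrix-vector product $d(t+1) = W \odot d(t)$, where $W \in \mathbb{T}^{N \times N}$ has entries $W_{vu} = w_{uv}$ when $(u,v) \in E$ and $W_{vu} = -\infty$ otherwise. A tropical matrix-vector product is a special case of the tropical affine map handled by Lemma \ref{lem:affine-layer} (with bias $b \equiv -\infty$), so a single MHTA layer with $N$ heads of head-dimension $d_k = 2$ realizes one step of the recursion. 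Each head $v$ is responsible for a single DP coordinate and implements, via Lemma \ref{lem:max-gate}, the weighted $\oplus$-gate $\max_{u}\{w_{uv} + d_u(t)\}$. Non-edges $(u,v) \notin E$ are suppressed by shaping the corresponding key so that its Hilbert distance $\Gamma_u$ to the query is arbitrarily large, driving the attention score to $-\infty$ exactly as in the proof of Lemma \ref{lem:max-gate}.

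Next, I would iterate this construction $T$ times and appeal to Corollary \ref{cor:depthL} with layered tropical circuit $P^{(t)} = W$ for every $t \in \{1,\dots,T\}$, obtaining $d(t) = W^{\odot t} \odot d(0)$ at the token values of layer $t$. The base case requires encoding the one-hot initial condition $d(0) = (\delta_{v,v_0})_{v \in S}$ as token values in the tropical projective simplex; this is accomplished by preparing input tokens whose coordinates after $\phi_\lambda$ are $0$ at position $v_0$ and $-\infty$ elsewhere, which is admissible since $\phi_\lambda$ sends non-positive coordinates to $-\infty$ (Lemma \ref{lem:phi-valuation}). Because Corollary \ref{cor:depthL} explicitly states that no Euclidean de-valuation $\psi = \exp$ is applied between successive MHTA layers inside the stack, the max-plus algebra is preserved verbatim across depth, so the outputs of layer $t$ can be fed directly into layer $t+1$ without distortion.

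The main obstacle I expect is token-slot bookkeeping, namely ensuring that successive layers write to and read from consistent positions: Lemma \ref{lem:affine-layer} deposits head-$v$'s output at a fresh query token $i_v$, and the next layer must locate $d_u(t)$ in slot $i_u$ when its head $v$ evaluates the predecessor weight $w_{uv}$. I would resolve this by permanently allocating $N$ DP-state token slots $\{i_v\}_{v \in S}$ (plus one bias slot fixed to $0$ for any required shifts) and choosing the per-layer projections $\mathbf{W}_Q^{(v)}, \mathbf{W}_K^{(v)}, \mathbf{W}_V^{(v)}$ so that head $v$ at layer $t+1$ reads the contents of slots $\{i_u : (u,v) \in E\}$ deposited by layer $t$, as prescribed by the construction of Lemma \ref{lem:max-gate}. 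Since this is purely a re-indexing of that construction, it introduces no extra width or heads, matching the $N$-heads-per-layer budget stated in the theorem, and an inductive argument on $t$ then gives token values $d(t)$ for all $0 \le t \le T$.
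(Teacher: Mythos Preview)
Your proposal is correct and follows essentially the same strategy as the paper: identify tokens with vertices, dedicate one head per vertex to compute the weighted $\oplus$-gate $\max_{u:(u,v)\in E}\{w_{uv}+d_u(t)\}$ via Lemma~\ref{lem:max-gate}, and stack $T$ such layers. The paper's proof simply invokes Lemma~\ref{lem:max-gate} directly at each layer and iterates, whereas you route through Lemma~\ref{lem:affine-layer} (viewing one Bellman step as the tropical linear map $d\mapsto W\odot d$) and cite Corollary~\ref{cor:depthL} for the composition-without-devaluation point; since Lemma~\ref{lem:affine-layer} is itself built on Lemma~\ref{lem:max-gate}, the two arguments coincide at the level of the actual construction, and your added bookkeeping (initial one-hot encoding via $\phi_\lambda$, non-edge suppression, persistent token slots) just makes explicit what the paper leaves implicit.
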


\begin{proof}
If we label the tokens by the vertices of $S$, at layer~$t$ we store $d_v(t)$ in the value field of token~$v$. To obtain $d_v(t+1)$ let head $h=v$ whose query token is $v$. Then, one can apply Lemma~\ref{lem:max-gate} with index set
$J=\{\,u\mid(u,v)\in E\}$, input scalars $x_u=d_u(t)$ and weights $w_{uv}$, thereby producing $d_v(t+1)$ as context at token~$v$. Since every head acts on em disjoint query tokens, all $v\in S$ are updated in parallel. Repeating for $T$ layers unrolls the dynamic program, hence layer~$T$ realizes the horizon-$T$ value vector.
\end{proof}

Let $(\Gamma=(V,E))$ be a directed graph with $(|V|=n)$ and tropical weighted adjacency matrix $(D\in\mathbb{T}^{n\times n})$. For $k\ge 0$ the tropical power $D^{\odot k}$ encodes path weights of length $k$. Fix a horizon $T\in\mathbb{N}$ and set the finite Kleene star
\[
D^{*(T)} ;:=; \bigoplus_{k=0}^{T-1} D^{\odot k}\in\mathbb{T}^{n\times n}.
\]
For $v_0\in\mathbb{T}^n$ the horizon-T value vector is $v_T=D^{*(T)}\otimes v_0$.

\begin{theorem}[MHTA learns tropical transitive closure]
\label{th:mhtakleene}
Given $\Gamma=(V,E)$ with $|V|=n$ and adjacency $D\in\mathbb{T}^{n\times n}$, for every $T\in\mathbb{N}$ there exists an MHTA stack of depth $T$ with $n$ heads per layer and head dimension $d_k$ such that the token values at layer $t$ equal $v_t\in\mathbb{T}^n$ and
\[
v_{t+1};=; D\otimes v_t \quad (t=0,1,\dots,T-1).
\]
In particular, the output at layer $T$ equals $v_T=D^{*(T)}\otimes v_0$.
\end{theorem}

\begin{proof}
Indexing tokens by $(V)$ and store $(v_t(u))$ as the value on token $(u)$ at depth $(t)$, we fix $(v\in V)$ and in layer $(t)$, we assign head $(h=v)$ whose query is token $(v)$. Then, we apply Lemma B.1 to the index set $(J_v={u\in V:(u,v)\in E})$ with inputs $(x_u=v_t(u))$ and weights $(w_{uv}=D_{vu})$. The head returns at its query token
\[
\bigoplus_{u\in J_v}\bigl(w_{uv}\odot x_u\bigr);=;\max_{u:(u,v)\in E}{D_{vu}+v_t(u)};=;(D\otimes v_t)(v);=;v_{t+1}(v).
\]
All $(v\in V)$ update in parallel because heads use disjoint query tokens; thus $(n)$ heads suffice in each layer. Iterating over $t=0,\dots,T-1$ yields $(v_T)$, and by the path–semiring semantics we have $(v_T=D^{*(T)}\otimes v_0)$. 
\end{proof}

\begin{corollary}
In a tropical semiring and under the assumption that no negative cycle exist, shortest paths exist and are cycle-free, so their lengths are realized by paths of at most $(n-1)$ arcs; consequently the infinite Kleene star truncates at $(n-1)$~\cite{joswig2022parametric}. Thus, every shortest path uses at most $(n-1)$ arcs, hence
\[
D^*;=;I\oplus D\oplus\cdots\oplus D^{\odot (n-1)};=;D^{*(n)}.
\]
Therefore the MHTA stack of with depth $T\ge n-1$ learns $(D^*\otimes v_0)$.
\end{corollary}

In the embedding space of MHTA, the feasible domain is a polyhedron in parameter space on which shortest paths exist and the Kleene star is well-defined. The MHTA stack thus implements Bellman steps in the parameter space, thus the combinatorics of feasible solutions are absorbed in the polyhedral stratification. In particular, the map $(v_0,\theta)\mapsto D^{*(T)}(\theta)\otimes v_0$ is a tropical polynomial, and the Tropical Attention has a sufficient ability to learn such maps without recurrence. Hence, in principle, MHTA can learn the closure map itself rather than the step-by-step recurrence.

\section{Comparison between vanilla attention and Tropical Attention}\label{sec:comparison}
In this section, we compare the algorithmic view between vanilla attention and Tropical Attention.

\begin{algorithm}
\caption{Comparison between vanilla attention and Tropical Attention}
\label{algo:attention_comparison}

\begin{minipage}{0.35\textwidth}
\scriptsize
\begin{algorithmic}    
\Function{\textsf{attention}}{$\mathbf{X}\,{:}\;n\times d$}
  \State $\mathbf{Q},\mathbf{K},\mathbf{V} \gets
         \textsf{linear}\!\bigl(\mathbf{X}\bigr)\textsf{.chunk}(3)$
  \State $\widetilde{\mathbf{A}} \gets
         \textsf{einsum}\!\bigl(
            id,\,jd \;\rightarrow\; ij,
            \mathbf{Q},\mathbf{K}\bigr)$
  \State $\mathbf{A} \gets
         \textsf{softmax}\!\bigl(\widetilde{\mathbf{A}}/\sqrt{d},\;-1\bigr)$
  \State $\mathbf{O} \gets
         \textsf{einsum}\!\bigl(
            ij,\,jd \;\rightarrow\; id,
            \mathbf{A},\mathbf{V}\bigr)$
  \State \Return $\textsf{linear}\!\bigl(\mathbf{O}\bigr)$
\EndFunction

\end{algorithmic}
\end{minipage}  
\vrule width 0.5pt
\begin{minipage}{0.65\textwidth}
\scriptsize
\begin{algorithmic}
\Function{\textsf{trop\_attention}}{$\mathbf{X}\,{:}\;n\times d$}

  \State $\mathbf{Q}',\mathbf{K}',\mathbf{V}' \gets
         \log\!\bigl(\mathrm{ReLU}\!\bigl(
           \textsf{linear}\!\bigl(\mathbf{X}\bigr)
         \bigr)\bigr)\_{\mathrm{chunk}(3)}$

  \State $\lambda \gets \mathrm{Parameter}\bigl(\mathbf{N}\bigr)$
  \State $\mathbf{Q}\gets\mathbf{Q}'-\lambda,\;
         \mathbf{K}\gets\mathbf{K}'-\lambda,\;
         \mathbf{V}\gets\mathbf{V}'-\lambda$

  \State $\mathbf{Q}_{btd}\;=\;\displaystyle
         \max_{j}\bigl(\mathbf{Q}_{btj}+W^{(Q)}_{dj}\bigr)$
  \State $\mathbf{K}_{btd}\;=\;\displaystyle
         \max_{j}\bigl(\mathbf{K}_{btj}+W^{(K)}_{dj}\bigr)$
  \State $\mathbf{V}_{btd}\;=\;\displaystyle
         \max_{j}\bigl(\mathbf{V}_{btj}+W^{(V)}_{dj}\bigr)$

  \State $\forall i,j:\;
         \mathbf{D}_{bij} \gets
         \max_{d}\bigl(\mathbf{Q}_{bid}-\mathbf{K}_{bjd}\bigr)\;-\;
         \min_{d}\bigl(\mathbf{Q}_{bid}-\mathbf{K}_{bjd}\bigr)$
  \State $\mathbf{S} \gets -\,\mathbf{D}$

  \State $\forall i,d:\,
         \mathbf{C}_{bid} \gets
         \max_{j}\bigl(\mathbf{S}_{bij}+\mathbf{V}_{bjd}\bigr)$
  \State $\mathbf{O} \gets \exp(\mathbf{C})$

  \State \Return $\textsf{linear}\!\bigl(\mathbf{O}\bigr)$
\EndFunction
\end{algorithmic}
\end{minipage}
\end{algorithm}

\section{Dataset Details}
\label{app:datasets}

Our evaluation suite covers eleven canonical problems\footnote{Code used to generate data can be found in our public repository: \url{https://github.com/Baran-phys/Tropical-Attention/blob/main/dataloaders.py}}: 

\paragraph{Floyd–Warshall Dataset.}
Each example is a weighted directed graph with nonnegative edge weights; we compute the all-pairs shortest-path distances with the Floyd–Warshall algorithm and use the resulting distance matrix as the regression target. Inputs are the flattened (zero-filled for missing edges) weight matrix with positional indices.

\paragraph{QuickSelect Dataset.}
Each example is an unsorted list of integers together with an order statistic $k$; the label is a token-wise binary mask marking all positions that contain the $k$-th smallest value. Inputs provide per-element features (value and $k$ or a normalized rank proxy) in the original list order.

\paragraph{3SUM–Decision Dataset.}
Each example consists of a list of integers and a target $T$; the label is $1$ if any three distinct elements sum to $T$, and $0$ otherwise. Inputs present per-token pairs $[x_i, T]$ so the model can condition each element on the common target.

\paragraph{Subset–Sum–Decision Dataset.}
Each example is a list of integers and a target $T$; the label indicates whether some subset sums exactly to $T$. Inputs again use per-token pairs $[x_i, T]$, and labels are computed by an exact dynamic-programming subset-sum solver.

\paragraph{Balanced Partition Dataset.}
Given a multiset of integers, the goal is to split it into two subsets with minimal absolute difference of sums; the label is a token-wise $0/1$ membership vector for one optimal subset (chosen deterministically). Ground truth is produced via a standard DP over achievable partial sums with a traceback to a canonical solution.

\paragraph{0–1 Knapsack Dataset.}
Each instance provides item pairs $(v_i,w_i)$ and a capacity $C$; the label is a token-wise $0/1$ mask for an optimal item set that maximizes total value under the weight budget. We compute ground truth with an exact dynamic-programming solver and repeat $C$ on every token as an input feature.

\paragraph{Fractional Knapsack Dataset.}
Each instance also provides $(v_i,w_i)$ and a capacity $C$; the label is a per-item fraction in $[0,1]$ from the optimal fractional solution. Ground truth is computed by the canonical greedy algorithm that takes items in decreasing value-to-weight ratio.

\paragraph{Convex Hull Dataset.}
Each example is a set of 2-D points; the label is $1$ for points on the convex hull and $0$ otherwise. Hull membership is determined with a standard monotone chain construction.

\paragraph{Strongly Connected Components (SCC) Dataset.}
Each example is a directed graph; the label is a flattened $n\times n$ matrix where entry $(i,j)$ is $1$ iff nodes $i$ and $j$ belong to the same strongly connected component. The graphs are generated Erdős–Rényi. However, to make the dataset more challenging, we also added curvature to the graph by adding communities to the graph. We obtain SCC memberships using a graph library routine and expose adjacency plus positional indices as inputs.

\paragraph{Bin Packing Dataset.}
Each example supplies item sizes and a single bin capacity; the label is a token-wise $0/1$ indicator of whether an item starts a new bin under First-Fit Decreasing applied to the clean sizes. Inputs include each item’s size, the shared capacity, and a normalized position index after sorting.

\paragraph{Minimum Coin Change (0/1) Dataset.}
Each instance contains a multiset of coin values and a target amount $T$; the label is a token-wise $0/1$ mask for one optimal solution using the fewest coins (or the all-zero mask if no solution exists). Ground truth is computed with an exact dynamic-programming solver with traceback.

\section{Training \& Evaluation Protocol}
\label{app:training}

This appendix complements the experimental setup outlined in
Sec.~\ref{sec:experiments}.  We focus on the conceptual pipeline. The low-level engineering choices (e.g.\ logging cadence, file formats) are documented in the public code repository\footnote{\url{https://github.com/Baran-phys/Tropical-Attention/}}. The primary packages utilized in constructing our experiment is Pytorch \cite{pytorch}, Pandas and Scipy \cite{scipy}, SciKitLearn \cite{scikit-learn}, and Numpy \cite{numpy}. The basic workflow is described below:
\begin{enumerate}
  \item \textbf{Dataset generation.}  
        For the selected combinatorial task we generate input and output pairs using the hyperparameters in Table \ref{tab:exp-settings}
  \item \textbf{Model instantiation.}  A shallow Transformer encoder—configured with $1$ layer, $2$ attention heads and hidden width $64$—is equipped with one of three attention mechanisms: \emph{Vanilla}, \emph{Tropical}, or \emph{Adaptive}.
  \item \textbf{Optimization.}  
        We train for $N_{\text{epoch}}$ epochs using AdamW ($10^{-3}$, constant, no warm-up). We use one NVIDIA Tesla V100 GPU to train each model. Models trained with a sufficiently large batch size (500) training over 10M samples, took approximately 2.5 minutes to train. For more memory intensive graph models, our training time was approximately 45 minutes given small batch sizes of 16. The objective is chosen per-task:
        \begin{itemize}
          \setlength\itemsep{2pt}
          \item BCE with logits – pooled binary tasks,
          \item token-wise BCE,
          \item mean-squared error – regression tasks.
        \end{itemize} 
        $N_{\text{epoch}}\!=\!100$
  \item \textbf{Evaluation.}  
        After training we reload the final checkpoint, generate a new test set, and compute (i) mean loss for regression tasks and (ii) $F_{1}$ for classification tasks on the generated test set. We evaluate our models on in-distribution data (data generated using the same hyperparameters as during training) and on out-of-distribution (OOD) data using the hyperparameters described in Table \ref{tab:exp-settings} using the OOD protocol described in Section \ref{sec:experiments}. For Length OOD, all models were trained on sequence length of 8 and we evaluated them at sequence length of 64, with the exception of the graph problems (FloydWarshall and SCC), which were evaluated on sequence length of 16. For Perturbative Noise OOD, each input was perturbed with probability 0.5 with a random integer sampled from the task's perturbative noise range. 
\end{enumerate}

\begin{table}[ht]
  \centering
  \caption{Training hyperparameters and data ranges for each combinatorial task. Each task was trained with 10M samples, learning rate of 0.0001, input sequence length of 8, and no perturbations. The ranges in the table are used to draw random integer values for the given parameter within the data generation portion of the training.}
  \label{tab:exp-settings}
  \resizebox{\textwidth}{!}{%
    \begin{tabular}{l r c c c c c}
      \toprule
      \textbf{Dataset} & \textbf{Epochs}
        & \textbf{Target Range}
        & \textbf{Weight Range}
        & \textbf{Value Range}
        & \textbf{OOD Value Range}
        & \textbf{Perturbative Noise Range} \\
      \midrule
      SubsetSumDecision   & 100  & (1,10)   & N/A     & (-5,5)   & (-20,20) & (10,30) \\
      Knapsack            & 100  & (10,20)  & (1,10)  & (1,10)   & (11,21)   & (10,30) \\
      FractionalKnapsack  & 100  & (10,20)  & (1,10)  & (1,10)   & (11,21)   & (1,5) \\
      MinCoinChange       & 100  & (10,20)  & N/A     & (1,10)   & (11,21)   & (1,5) \\
      Quickselect         & 100  & N/A      & N/A     & (1,10)   & (11,21)   & (1,5) \\
      BalancedPartition   & 100  & N/A      & N/A     & (1,10)   & (11,100)   & (10,30) \\
      BinPacking          & 100  & (10,30)  & N/A     & (1,10)   & (11,100)   & (10,30) \\
      ConvexHull          & 100  & N/A      & N/A     & (0,10)   & (11,21)   & (1,5) \\
      ThreeSumDecision    & 100  & (-75,75) & N/A     & (-20,20) & (-375,375) & (40,60) \\
      FloydWarshall       & 100  & N/A      & N/A     & (1,15)   & (16,30)   & (1,10) \\
      SCC \tablefootnote{SCC uses a connectivity probability rather than an integer input value, hence the small decimal for Value Ranges and N/A for perturbatiove noise range. For perturbative noise range the connectivity switches with given perturbation probability.}                & 100  & N/A      & N/A     & 0.001    & 0.1        & N/A  \\
      \bottomrule
    \end{tabular}%
  }
\end{table}

\end{document}